\renewcommand{\thefootnote}{\fnsymbol{footnote}}
\newtheorem{theorem}{Theorem}
\newtheorem{lemma}{Lemma}
\newtheorem{definition}{Definition}
\newtheorem{assumption}{Assumption}
\newtheorem{property}{Property}
\newcommand{\NSF}{\texttt{NSF}}
\newcommand{\algname}{\texttt{ELF}\xspace}
\newcommand{\pitheta}{{\pi_\theta}}
\newcommand{\E}{{\mathbf E}}
\newcommand{\oset}[3][0ex]{%
  \mathrel{\max\limits^{
    \vbox to#1{\kern-2\ex@
    \hbox{$\scriptstyle\max$}\vss}}}}
\definecolor{plotColor1}{HTML}{E6194B}
\definecolor{plotColor2}{HTML}{F58230}
\definecolor{plotColor3}{HTML}{61B64F}
\definecolor{plotColor4}{HTML}{5CB5FF}
\definecolor{plotColor5}{HTML}{E799FF}
\DeclareRobustCommand\elfline  {\tikz[baseline=-0.6ex]\draw[plotColor1, thick] (0,0)--(0.6,0);}
\DeclareRobustCommand\lrline  {\tikz[baseline=-0.6ex]\draw[plotColor2, thick] (0,0)--(0.6,0);}
\DeclareRobustCommand\qsaDPline  {\tikz[baseline=-0.6ex]\draw[plotColor3, thick, dashed] (0,0)--(0.6,0);}
\DeclareRobustCommand\qsaEqOddsline  {\tikz[baseline=-0.6ex]\draw[plotColor3, thick, dash dot] (0,0)--(0.6,0);}
\DeclareRobustCommand\qsaEqOppline  {\tikz[baseline=-0.6ex]\draw[plotColor3, thick, dash pattern={on 3pt off 1pt on 1pt off 1pt on 1pt off 1pt}] (0,0)--(0.6,0);}
\DeclareRobustCommand\qsaPEline  {\tikz[baseline=-0.6ex]\draw[plotColor3, thick] (0,0)--(0.6,0);}
\DeclareRobustCommand\qsaDisImpline  {\tikz[baseline=-0.6ex]\draw[plotColor3, thick, dotted] (0,0)--(0.7,0);}
\DeclareRobustCommand\flDPline  {\tikz[baseline=-0.6ex]\draw[plotColor4, thick] (0,0)--(0.6,0);}
\DeclareRobustCommand\flEqOddsline  {\tikz[baseline=-0.6ex]\draw[plotColor4, thick, dashed] (0,0)--(0.7,0);}
\DeclareRobustCommand\fcline  {\tikz[baseline=-0.6ex]\draw[plotColor5, thick] (0,0)--(0.6,0);}
\title{Enforcing Delayed-Impact Fairness Guarantees
}
\author{
  Aline Weber$^*$, Blossom Metevier$^*$, Yuriy Brun, Philip S.\ Thomas, Bruno Castro da Silva\\
  Manning College of Information and Computer Sciences \\
  University of Massachusetts \\
}
\begin{document}
\maketitle
\def\thefootnote{*}\footnotetext{These authors contributed equally to this work.}\def\thefootnote{\arabic{footnote}}

\begin{abstract}
Recent research has shown that seemingly fair machine learning models, when used to inform decisions that have an impact on peoples' lives or well-being (e.g., applications involving education, employment, and lending), can inadvertently increase social inequality in the long term. 
This is because prior fairness-aware algorithms only consider static fairness constraints, such as equal opportunity or demographic parity. However, enforcing constraints of this type may result in models that have negative long-term impact on disadvantaged individuals and communities. 
We introduce \algname (Enforcing Long-term Fairness), the first classification algorithm that provides high-confidence fairness guarantees in terms of long-term, or delayed, impact. 
We prove that the probability that \algname returns an unfair solution is less than a user-specified tolerance
and that (under mild assumptions), given sufficient training data, \algname is able to find and return a fair solution if one exists. We
show experimentally that our algorithm can successfully mitigate 
long-term unfairness.

\end{abstract}

\keywords{Fair machine learning \and delayed impact \and machine learning with high confidence guarantees \and classification \and fair classification}

\section{Introduction}
\label{sec: introduction}

Using machine learning (ML) for high-stakes applications, such as lending, hiring, and criminal sentencing, may potentially harm historically disadvantaged communities~\citep{flage2018ethnic, blass2019algorithmic, bartlett2021consumer}.  
For example, software meant to guide lending decisions has been shown to exhibit racial bias~\citep{bartlett2021consumer}. 
Extensive research has been devoted to  algorithmic approaches that promote fairness and ameliorate concerns of bias and discrimination for socially impactful applications. 
The bulk of this research has focused on the classification setting, in which an ML model
must make predictions given information about a person or community.

Most fairness definitions studied in the classification setting are \emph{static}~\citep{liu2018delayed} in that they do not consider how a classifier's predictions impact the long-term well-being of a community.
%
In their seminal paper, \citet{liu2018delayed}~
%
show that classifiers' predictions that appear fair with respect to static fairness criteria can nevertheless negatively impact the long-term wellness of the community it aims to protect. 
Importantly, 
they assume that the precise analytical relationship between a classifier's prediction and its long-term impact, or \emph{delayed impact} (DI), is known. This is also the case in other fairness-related work \citep{d2020fairness, ge2021towards, hu2022achieving}.
\emph{Designing classification algorithms that mitigate negative delayed impact when this relationship is not known has remained an open problem.}

In this work, we introduce \algname (Enforcing Long-term Fairness), the first classification algorithm that solves this open problem. 
\algname does not require access to 
an analytic model of the delayed impact of a classifier's predictions. Instead, it works under the less strict assumption that the algorithm has access to historical data 
containing observations of the delayed impact that results from the predictions of an existing classifier. We illustrate this setting below with an example.

\noindent\textbf{Loan repayment example.} As a running example, consider 
a bank that wishes to increase its profit by maximizing successful loan repayments. 
The bank's decisions are informed by a classifier that predicts repayment success.
These decisions may affect the long-term financial well-being of loan applicants, such as their savings rate or debt-to-income ratio two years after a lending decision is made.
Taking this delayed impact into account is important: when a subset of the population is disadvantaged, the bank may want (or be required by law) to maximize profit subject to a fairness constraint that considers the disadvantaged group's long-term well-being. 
Unfortunately, existing methods that address this problem can only be used if analytical models of how repayment predictions affect long-term financial well-being are available. 
Constructing such models is challenging: many complex factors influence how different demographic groups in a \mbox{given community are affected by financial decisions.}
%

%
\algname, by contrast, can ensure delayed-impact fairness with high confidence as long as 
the bank can collect data about the long-term financial well-being of loan applicants, following decisions based on an existing classifier. 
As an example, the bank might access information about the savings rate of an applicant two years after a lending decision is made. 
Here, delayed impact could be defined as the real-valued savings rate. 
However, we emphasize that our approach works with any metric of delayed impact that can be observed and quantified, including more holistic metrics than savings rate.
%
%
%
%
As one motivating use case, this work provides the algorithmic tools to responsibly apply ML for this task.\footnote{Notice that if used by adversaries, our method could be used to enforce bias instead of minimizing it.}


\noindent \textbf{Contributions.} We present \algname, the first method capable of enforcing DI fairness when the analytical relationship between predictions and DI is not known \emph{a priori}. 
To accomplish this, we simultaneously formulate the fair classification problem as both a classification and a reinforcement learning problem---classification for optimizing the primary objective (a measure of classification loss) and reinforcement learning when considering DI.
We prove that \textbf{1)} the probability that \algname returns a model that is unfair (in terms of DI) is at most $\delta$, where $\delta$ is a hyperparameter that can be set appropriately for the application at hand; and \textbf{2)} given sufficient training data, \algname is able to find and return a solution that is fair if one exists.
We provide an empirical analysis of \algname's performance while varying both the amount of training data and the influence that a classifier's predictions have on DI. 

\noindent \textbf{Limitations and future work.} \algname's high probability fairness guarantees only hold if the world has not changed between the time training data was collected and the time the trained classifier is deployed. While this (stationarity) assumption is common in ML, it may be unnatural in this setting since gathering data that includes measures of long-term impact requires that a correspondingly long duration of time has passed, and so nonstationarity of the data-generating distribution could compound over time to make this assumption unreasonable.
For example, the way a group of people was affected by model predictions a decade ago may not be reflective of the present.
While providing guarantees when nonstationarity occurs is important future work, in this paper we focus on the important first step 
of 
%
providing the first classification algorithm that provides DI fairness guarantees in the stationary setting.

Additionally, notice that some applications may require long-term \emph{and} static fairness to be simultaneously satisfied.
Appendix~\ref{app: extensions} shows how \algname can be used in conjunction with prior methods~\citep{metevier2019offline} to enforce a broader set of fairness definitions that includes common static fairness definitions. 
We leave an empirical analysis of these settings to future work. \looseness=-1
 
\section{Problem statement}
\label{sec: problem statement}
We now formalize the problem of classification with delayed-impact fairness guarantees. 
As in the standard classification setting, a dataset consists of $n$ data points, the $i$\textsuperscript{th} of which contains $X_i$, a feature vector describing, e.g., a person, and a label $Y_i$. 
%
%
Each data point also contains a set of \emph{sensitive attributes}, such as race and gender.
Though our algorithm works with an arbitrary number of such attributes, for brevity our notation uses a single attribute, $T_i$. 

We assume that each data point also contains a prediction, $\widehat Y_i^\beta$, made by a stochastic classifier, or model, $\beta$.
We call $\beta$ the \emph{behavior model}, defined as
$\beta(x,\hat y) \coloneqq \Pr(\widehat Y_i^\beta {=} \hat y | X_i{=}x)$.
%
%
Let $I^\beta_i$ be a measure of the \emph{delayed impact} (DI) resulting from deploying  $\beta$ for the person described by the $i$\textsuperscript{th} data point. 
In our running example, $I^\beta_i$ corresponds to the savings rate two years after the prediction $\widehat Y_i^\beta$ was used to decide whether the $i$\textsuperscript{th} client should get a loan.
We assume that larger values of $I^\beta_i$ correspond to better DI. 
We append $I^\beta_i$ to each data point and thus define the dataset to be a sequence of $n$ independent and identically distributed (i.i.d.) data points $D\coloneqq\{(X_i, Y_i, T_i, \widehat Y^\beta_i, I^\beta_i )\}^n_{i=1}$. 
%
%
For notational clarity, when referring to an arbitrary data point, we write $X,Y,T, \widehat Y^\beta$, and $I^\beta$ without subscripts to denote $X_i, Y_i, T_i, \widehat Y^\beta_i$, and $I^\beta_i$, respectively. 

Given a dataset $D$, the goal is to construct a classification algorithm that takes as input $D$ and outputs a new model $\pitheta$ that is as accurate as possible while enforcing delayed-impact constraints.\footnote{Our algorithm works with arbitrary performance objectives, not just accuracy.} 
%
%
This new model $\pitheta$ is of the form $\pitheta(x, \hat y) \coloneqq \Pr(\widehat Y^\pitheta{=}\hat y|X{=}x)$, where $\pitheta$ is parameterized by a vector $\theta \in \Theta$ (e.g., the weights of a neural network), for some feasible set $\Theta$, and where $\widehat Y^\pitheta$ is the prediction made by $\pitheta$ given $X$. 
Like $I^\beta_i$, let $I^\pitheta_i$ be the delayed impact if the model outputs the prediction $\widehat Y^\pitheta_i$.

We consider the standard classification setting, in which a classifier's prediction depends only on the feature vector $X$ (see Assumption~\ref{ass: markov property}). Furthermore, we assume that regardless of the model used to make predictions, the distribution of DI given a prediction remains the same (see Assumption~\ref{ass: switch prediction}).\footnote{See Appendix \ref{app: assumption intuitions} for a discussion on the intuition and implications of these assumptions.}  
\begin{assumption}
\label{ass: markov property}
A model's prediction $\widehat Y^{\pitheta}$ is conditionally independent of $Y$ and $T$ given $X$. That is, for all $x,y,t,$ and $\hat y$, $\Pr(\widehat Y^\pitheta {=} \hat y | X{=}x, Y{=}y, T{=}t) = \Pr(\widehat Y^\pitheta {=} \hat y | X{=}x).$
%
\end{assumption}
%
%
\begin{assumption}
\label{ass: switch prediction}
For all $ x,y,t,\hat y$, and $i$,  \begin{equation}\Pr(I^{\beta}{=}i|X{=}x,Y{=}y,T{=}t,\widehat Y^{\beta}{=}\hat y)= 
\Pr(I^{\pi_{\theta}}{=}i|X{=}x,Y{=}y,T{=}t,\widehat Y^{\pi_{\theta}}{=}\hat y).
\end{equation}
\end{assumption}
Our problem setting can alternatively be described from the reinforcement learning (RL) perspective, where feature vectors are the states of a Markov decision process,  
%
%
predictions are the actions taken by an agent, and DI is the reward received after the agent takes an action (makes a prediction) given a state (feature vector). 
From this perspective, Assumption~\ref{ass: switch prediction} asserts that regardless of the policy (model) used to choose actions, the distribution of rewards given an action remains the same. 

We consider $k$ \emph{delayed-impact objectives} $g_j: \Theta \rightarrow \mathbb{R}, j \in \{1,...,k\}$, that take as input a parameterized model $\theta$ and return a real-valued measurement of fairness in terms of delayed impact. We adopt the convention that $g_j(\theta) \leq 0$ iff $\theta$ causes behavior that is fair with respect to delayed impact, and $g_j(\theta) > 0$ otherwise. To simplify notation, we assume there exists only a single DI objective (i.e., $k=1$) and later show how to enforce multiple DI objectives (see Algorithm~\ref{alg: main algorithm multiple constraints}).
We focus on the case in which each DI objective is based on a conditional expected value having the form 
\begin{equation}
    \label{eqn: delayed impact objective}
    g(\theta) \coloneqq \tau - \mathbf E[I^\pitheta | c(X,Y,T)],
\end{equation}
where $\tau \in \mathbb R$ is a tolerance and $c(X,Y,T)$ is a Boolean conditional relevant to defining the objective. Notice that this form of DI objective allows us to represent DI fairness notions studied in the literature such as~\citeauthor{liu2018delayed}'s long-term improvement.\footnote{Our method is not limited to this form. In Appendix~\ref{app: extensions}, we discuss how \algname can provide similar high-confidence guarantees for other forms of DI.}
To make~\eqref{eqn: delayed impact objective} more concrete, consider our running example where the bank would like to enforce a fairness definition that protects a disadvantaged group, $A$. In particular, the bank wishes to ensure that, for a model $\pitheta$ being considered, the future financial well-being---in terms of savings rate---of applicants in group $A$ (impacted by $\pitheta$'s repayment predictions) does not decline relative to those induced by the previously deployed model, $\beta$. 
In this case, $I^\pitheta$ is the financial well-being of an applicant $t$ months after the loan application and $c(X,Y,T)$ is the Boolean event indicating if an applicant is in group $A$. 
%
%
Lastly, $\tau$ could represent a threshold on which the bank would like to improve, e.g., the average financial well-being of type $A$ applicants given historical data collected using $\beta$: $\tau = \frac{1}{n_-}\sum_{d=1}^nI^\beta_d\llbracket T_d = A \rrbracket$, where $\llbracket \cdot \rrbracket$ denotes the Iverson bracket and $n_-=\sum_{d=1}^n\llbracket T_d = A \rrbracket $ is the number of applicants of type $A$. 
In this setting, the bank is interested in enforcing the following delayed-impact objective: $\mathbf E[I^\pitheta|T = A] \geq \tau$. 
Then, defining $g(\theta) = \tau - \mathbf E[I^\pitheta|T= A]$
ensures that $g(\theta) \leq 0$ iff the new model $\pitheta$ satisfies the DI objective.
Notice that an additional constraint of the same form can be added to protect other applicant groups.

\subsection{Algorithmic properties of interest}
As discussed above, we would like to ensure that $g(\theta) \leq 0$, since this delayed-impact objective implies that $\theta$ (the model returned by a classification algorithm) is fair with respect to DI.
However, this is often not possible, as it requires extensive prior knowledge of how predictions influence DI.
%
%
Instead, we aim to create an algorithm that uses historical data to reason about its confidence that $g(\theta) \leq 0$. 
%
That is, we wish to construct a classification algorithm, $a$, where $a(D)\in \Theta$ is the solution provided by $a$ when given dataset $D$ as input, that satisfies DI constraints of the form
\begin{equation}
    \label{eqn: delayed impact constraint}
    \Pr(g(a(D)) \leq 0) \geq 1-\delta,
\end{equation}
where $\delta \in (0, 1)$ limits the admissible probability that the algorithm returns a model that is unfair with respect to the DI objective. Algorithms that satisfy~\eqref{eqn: delayed impact constraint} are called Seldonian~\citep{thomas2019preventing}.  

In practice, there may be constraints that are impossible to enforce, e.g., if the DI objectives described by the user are impossible to satisfy simultaneously~\citep{kleinberg2016inherent}, or if the amount of data provided to the algorithm is insufficient to ensure fairness with high confidence.
Then, the algorithm should be allowed to return ``No Solution Found'' (\NSF) instead of a solution it does not trust. 
Let $\NSF \in \Theta$ and $g(\NSF)=0$, indicating it is always fair for the algorithm to say ``I'm unable to ensure fairness with the required confidence.'' 
%
%
Note that a fair algorithm can trivially  satisfy~\eqref{eqn: delayed impact constraint} by always returning \NSF\ instead of a model. 
%
%
Ideally, if a nontrivial fair solution exists, the algorithm should be able to find and return it given enough data. We call this property \emph{consistency} and formally define it in Section~\ref{sec: theoretical results}.

Our goal is to design a fair classification algorithm that satisfies two properties: \textbf{1)} the algorithm satisfies~\eqref{eqn: delayed impact constraint}
%
%
and \textbf{2)} the algorithm is consistent, i.e., if a nontrivial fair solution exists, the probability that the algorithm returns a fair solution (other than \NSF) converges to one as the amount of training data goes to infinity. 
%
%
In Section~\ref{sec: theoretical results}, we prove that our algorithm, \algname, satisfies both properties. 
%

\section{Methods for enforcing delayed impact}
\label{sec: delayed impact in fair classification}
According to our problem statement, a fair algorithm must ensure with high confidence that $g(\theta) \leq 0$, where $\theta$ is the returned solution and $g(\theta) = \tau - \mathbf E[I^{\pi_{\theta}}|c(X,Y,T)]$.
Because \algname only has access to historical data, $D$, only samples of $I^{\beta}$ (the delayed impact induced by predictions made by $\beta$) are available.
%
In this section, we show how one can construct i.i.d.~estimates of $I^{\pi_{\theta}}$ using samples collected using $\beta$.
Then, we show how confidence intervals can be used to derive high-confidence upper bounds on $g(\theta)$.
Lastly, we provide pseudocode for our algorithm, \algname, which satisfies~\eqref{eqn: delayed impact constraint}.

\subsection{Deriving estimates of delayed impact} 
\label{sec: deriving estimates of delayed impact} 
To begin, notice that the distribution of delayed impacts in $D$ results from using the model $\beta$ to collect data. However, we are interested in evaluating the DI of a different model, $\pi_{\theta}$.
%
%
This presents a challenging problem: given data that includes the DI when a model $\beta$ was used to make predictions, how can we estimate the DI if $\pi_{\theta}$ were used instead? 
One na\"{\i}ve solution is to simply run $\pi_{\theta}$ on the held-out data. However, this would only produce predictions $\widehat Y^{\pi_{\theta}}$, not their corresponding delayed impact.
That is, the delayed impact for each data point would still be in terms of $\beta$ instead of $\pi_{\theta}$. 

We solve this problem using \emph{off-policy evaluation} methods from the RL literature---these use data from running one \emph{policy} (decision-making model) to predict what would happen (in the long term) if a different policy had been used. 
Specifically, we use \emph{importance sampling}~\citep{Precup2001} to obtain a new random variable $\hat I^{\pi_{\theta}}$, constructed using data from $\beta$, such that $\hat I^{\pi_{\theta}}$ is an unbiased estimator of $I^{\pi_{\theta}}$:
\begin{equation}
    \label{eqn: rv equivalence}
    \mathbf{E}\left [\hat I^{\pi_{\theta}}\middle |c(X,Y,T) \right]\!=\! \mathbf{E} \left [I^{\pi_{\theta}} \middle |c(X,Y,T) \right ].
\end{equation}
For each data point, the importance sampling estimator, $\hat I^{\pi_{\theta}}$, weights the observed delayed impacts $I^{\beta}$ based on how likely the prediction $\widehat Y^{\beta}$ is under $\pi_{\theta}$. If $\pi_{\theta}$ would make the label $\widehat Y^{\beta}$ more likely, then $I^{\beta}$ is given a larger weight (at least one), and if $\pi_{\theta}$ would make $\widehat Y^{\beta}$ less likely, then $I^{\pi_{\theta}}$ is given a smaller weight (positive, but less than one). 
The precise weighting scheme is chosen to ensure that $\E [\hat I^{\pi_{\theta}}]=\E [I^{\pi_{\theta}}]$.
%
Formally, the importance sampling estimator is $\hat I^{\pi_{\theta}} =  \pi_{\theta}(X, \widehat Y^{\beta})\beta(X, \widehat Y^{\beta})^{-1} I^{\beta}$,
%
%
where the term $\pi_{\theta}(X, \widehat Y^{\beta})/\beta(X, \widehat Y^{\beta})$ is called the \emph{importance weight}. 

Next, we introduce a common and necessary assumption: the model $\pi_\theta$ can only select labels for which there is \emph{some} probability of the behavior model selecting them (Assumption~\ref{ass: support}). In Section~\ref{sec: pseudocode} we discuss a practical way in which this can be ensured. Theorem \ref{thm: is known behavior model} establishes that the importance sampling estimator is unbiased, i.e., it satisfies \eqref{eqn: rv equivalence}.
\begin{assumption}[Support]
\label{ass: support}
For all $x$ and $y$, $\pi_{\theta}(x,y) > 0$ implies that $\beta(x,y) > 0$.
\end{assumption}
%
%
\begin{theorem}\label{thm: is known behavior model}
If Assumptions~\ref{ass: markov property}--\ref{ass: support} hold, then $\mathbf{E}[\hat I^{\pi_{\theta}}|c(X,Y,T)]\!=\! \mathbf{E}[I^{\pi_{\theta}}|c(X,Y,T)]$. \emph{\textbf{Proof.}} See Appendix~\ref{app: proof of unbiased is estimate}. \qed
\end{theorem}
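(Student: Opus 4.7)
The plan is to compute $\mathbf{E}[\hat I^{\pi_\theta}\mid c(X,Y,T)]$ directly using the tower property, reducing to the conditional expectation given a specific $(x,y,t)$ in the event $c(X,Y,T)$, and then summing out the auxiliary random variable $\widehat Y^\beta$ whose distribution we actually know from the data. The key identity driving the argument is the definition $\beta(x,\hat y)=\Pr(\widehat Y^\beta{=}\hat y\mid X{=}x)$, which under Assumption~\ref{ass: markov property} is the same as $\Pr(\widehat Y^\beta{=}\hat y\mid X{=}x,Y{=}y,T{=}t)$. Once $\beta(x,\hat y)$ appears as the conditional probability of the data prediction, it cancels against the denominator of the importance weight, leaving $\pi_\theta(x,\hat y)$.

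Concretely, I would first write
\begin{equation*}
\mathbf{E}\!\left[\hat I^{\pi_\theta}\,\middle|\,X{=}x,Y{=}y,T{=}t\right]
=\sum_{\hat y}\Pr(\widehat Y^\beta{=}\hat y\mid X{=}x,Y{=}y,T{=}t)\,\frac{\pi_\theta(x,\hat y)}{\beta(x,\hat y)}\,\mathbf{E}[I^\beta\mid X{=}x,Y{=}y,T{=}t,\widehat Y^\beta{=}\hat y].
\end{equation*}
Applying Assumption~\ref{ass: markov property} turns the conditional probability into $\beta(x,\hat y)$, and the support assumption (Assumption~\ref{ass: support}) ensures the ratio is well defined wherever it is not trivially zero, so the cancellation yields $\sum_{\hat y}\pi_\theta(x,\hat y)\,\mathbf{E}[I^\beta\mid X{=}x,Y{=}y,T{=}t,\widehat Y^\beta{=}\hat y]$. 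I would handle the $\beta(x,\hat y)=0$ case by adopting the usual convention $0/0=0$ and observing that Assumption~\ref{ass: support} forces the corresponding $\pi_\theta(x,\hat y)$ to be zero.

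Next I would invoke Assumption~\ref{ass: switch prediction} to replace the inner expectation $\mathbf{E}[I^\beta\mid X{=}x,Y{=}y,T{=}t,\widehat Y^\beta{=}\hat y]$ with $\mathbf{E}[I^{\pi_\theta}\mid X{=}x,Y{=}y,T{=}t,\widehat Y^{\pi_\theta}{=}\hat y]$. A second application of Assumption~\ref{ass: markov property} rewrites $\pi_\theta(x,\hat y)$ as $\Pr(\widehat Y^{\pi_\theta}{=}\hat y\mid X{=}x,Y{=}y,T{=}t)$, so the sum collapses to $\mathbf{E}[I^{\pi_\theta}\mid X{=}x,Y{=}y,T{=}t]$ by the law of total expectation. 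Finally, taking the outer expectation over $(X,Y,T)$ restricted to the event $\{c(X,Y,T)\}$ gives the stated equality.

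The main obstacle is purely bookkeeping: one must be scrupulous about which random variables are held fixed in each conditional, and about the roles played by each assumption (Markov to recognize $\beta$ and $\pi_\theta$ as the correct conditional prediction distributions, switch-prediction to swap the reward distribution, support to justify dividing by $\beta$). There is no analytic difficulty beyond verifying that the conditioning on $c(X,Y,T)$ passes harmlessly through all the algebra, which is immediate once the identity is established pointwise on $\{(x,y,t):c(x,y,t)\text{ holds}\}$.
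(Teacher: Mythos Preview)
Your proposal is correct and follows essentially the same argument as the paper's proof: both use Assumption~\ref{ass: markov property} to identify the conditional law of $\widehat Y^\beta$ with $\beta(x,\hat y)$ and cancel the importance weight, Assumption~\ref{ass: switch prediction} to swap the impact distribution from $\beta$ to $\pi_\theta$, and Assumption~\ref{ass: support} to handle division by $\beta$. The only cosmetic difference is that you work with iterated conditional expectations (conditioning first on $(x,y,t)$ via the tower property), whereas the paper expands everything into a single sum over $(x,y,t,\hat y,i)$ and manipulates the joint probabilities directly.
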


\subsection{Bounds on delayed impact}

This section discusses how to 
use unbiased estimates of $g(\theta)$ together with confidence intervals to derive high-confidence upper bounds on $g(\theta)$. 
While different confidence intervals for the mean can be substituted to derive these bounds, to make our method concrete, we consider the specific cases of Student's $t$-test~\citep{student1908probable} and Hoeffding's inequality~\citep{hoeffding1963probability}. 
Given a vector of $m$ i.i.d.~samples $(z_i)^m_{i=1}$ of a random variable $Z$, let $\bar Z = \frac{1}{m}\sum^m_{i=1} Z_i$ be the sample mean, $\sigma(Z_1, ..., Z_m) = \sqrt{\frac{1}{m-1}\sum^m_{i=1}(Z_i - \bar Z)^2}$ be the sample standard deviation (with Bessel's correction), and $\delta \in (0,1)$ be a confidence level. 
\begin{property}
    \label{prop: student's ttest}
    If $\sum_{i=1}^m Z_i$ is normally distributed, then
    \begin{equation}
        \Pr \left(\E[Z_i] \geq \bar Z - \frac{\sigma(Z_1, ..., Z_m)}{\sqrt{m}}t_{1-\delta, m-1}\right) \geq 1-\delta,
    \end{equation}
    where $t_{1-\delta, m-1}$ is the $1-\delta$ quantile of the Student's $t$ distribution with $m - 1$ degrees of freedom.
    \emph{\textbf{Proof.}} See the work of~\citet{student1908probable}. \qed
\end{property}
Property~\ref{prop: student's ttest} can be used to obtain a high-confidence upper bound for the mean of $Z$:
$U_{\texttt{ttest}} (Z_1, ..., Z_m) \coloneqq \bar Z + \frac{\sigma(Z_1, ..., Z_m)}{\sqrt{m}}t_{1-\delta, m-1}$.
Let $\hat g$ be a vector of  i.i.d.~and unbiased estimates of $g(\theta)$. Once these are computed (using importance sampling as described in Section~\ref{sec: deriving estimates of delayed impact}), they can be provided to $U_{\texttt{ttest}}$ to derive a high-confidence upper bound on $g(\theta)$: 
$ 
    \Pr(\tau - \mathbf E[\hat I^{\pi_{\theta}} | c(X, Y, T)] \leq U_\texttt{ttest}(\hat g)) \geq 1-\delta.
$ 
Importantly, inequalities based on Student's $t$-test only hold exactly if the distribution of $\sum Z_i$ is normal.\footnote{The central limit theorem states that the distribution of the sample mean converges to a normal distribution as $m \rightarrow \infty$ regardless of the distribution of $Z_i$, making this approximation reasonable for sufficiently large $m$.}
Our strategy for deriving high-confidence upper bounds for definitions of DI is general and can be applied using other confidence intervals (for the mean) as well. To illustrate this, in Appendix~\ref{app: bound di using hoeff} we describe an alternative bound
based on Hoeffding's inequality~\citep{hoeffding1963probability}, which replaces the normality assumption with the weaker assumption that $\hat g$ is bounded. This results in a different function for the upper bound, $U_\texttt{Hoeff}$.

\subsection{Complete algorithm}
\label{sec: pseudocode}
\begin{algorithm}[tb]
\caption{\algname}
\label{alg: main algorithm}
\textbf{Input}: \textbf{1)} $D = \{(X_i, Y_i, T_i, \widehat Y^\beta_i, I^\beta_i)\}^n_{i=1}$;
%
%
\textbf{2)} confidence level $\delta$; 
\textbf{3)} tolerance value $\tau$;
\textbf{4)} behavior model $\beta$; 
 and
\textbf{5)} $\texttt{Bound} \in\{ \texttt{Hoeff}, \texttt{ttest}\}$. 
\\
\textbf{Output}: Solution $\theta_c$ or \NSF.
\begin{algorithmic}[1] 
\STATE $D_c, D_f \leftarrow \texttt{partition}(D)$\label{line: partition dataset}
\STATE $n_{D_f} = \texttt{length}(D_f); \quad \hat g \leftarrow \langle \ \rangle$
\STATE $\theta_c \leftarrow \arg\min_{\theta\in\Theta}  \texttt{cost}(\theta,D_c, \delta, \tau, \beta, \texttt{Bound}, n_{D_f})$\label{line: candidate selection} 
%
%
%
\FOR{$j\in\{1,...,n_{D_f}\}$}
\STATE Let $(X_j, Y_j, T_j, \widehat Y^\beta_j, I^\beta_j)$ be the $j^\text{th}$ data point in $D_f$
\STATE \textbf{if} $ c(X_j, Y_j, T_j)$ is \texttt{True} \textbf{then} $\hat g$.append$\left(\tau - \frac{\pi_{\theta_c}(X_j,\widehat
Y^\beta_j)}{\beta(X_j, \widehat Y^\beta_j)}I^\beta_j\right)$ \textbf{end if }
\ENDFOR 
%
%
\STATE \textbf{if} \texttt{Bound} is \texttt{Hoeff} \textbf{then} $U = U_\texttt{Hoeff} (\hat g)$ \textbf{ else if} \texttt{Bound} is \texttt{ttest} \textbf{then} $U = U_{\texttt{ttest}}(\hat g)$ \textbf{end if}
%
\STATE \textbf{if} $U\geq 0$ \textbf{then return} \NSF\ \textbf{else return} $\theta_c$ 
\end{algorithmic}
\end{algorithm}
Algorithm~\ref{alg: main algorithm} provides pseudocode for \algname. 
%
%
Our algorithm has three main steps: First, the dataset $D$ is divided into two datasets (line~1).  
In the second step, \emph{candidate selection} (line~3), the first dataset, $D_c$, is used to find and train a model, called the \emph{candidate solution}, $\theta_c$. This step is detailed in Algorithm~\ref{alg: cost function} (see Appendix \ref{app: full algorithm}).
In the \emph{fairness test} (lines~4--9), the dataset $D_f$ is used to compute unbiased estimates of $g(\theta_c)$ using the importance sampling method described in Section~\ref{sec: deriving estimates of delayed impact}. These estimates are used to calculate a $(1{-}\delta)$-confidence upper bound, $U$, on $g(\theta_c)$, using Hoeffding's inequality or Student's $t$-test (line~8). 
Finally, $U$ is used to determine whether $\theta_c$ or \NSF\ is returned.

Recall that for importance sampling to produce unbiased estimates of $g(\theta_c)$, Assumption~\ref{ass: support} must hold.
%
%
%
To ensure this, 
we restrict $\Theta$ to only include solutions that satisfy Assumption~\ref{ass: support}:
%
\begin{assumption}
\label{ass: restrict feasible set}
Every $\theta \in \Theta$ satisfies Assumption~\ref{ass: support}. 
\end{assumption}
%
%
%
One way to achieve this is by ensuring that the behavior model $\beta$ has full support; that is, for all $x$ and $\hat y$, $\beta(x,\hat y) > 0$. Notice that many supervised learning algorithms already place non-zero probability on every label, e.g., when using Softmax layers in neural networks. In these commonly-occurring cases, Assumption~\ref{ass: restrict feasible set} is trivially satisfied.

%
%
%

%

\section{Theoretical results}
\label{sec: theoretical results}

This section shows that \textbf{1)}~\algname ensures delayed impact fairness with high confidence; that is, it is guaranteed to satisfy DI constraints as defined in~\eqref{eqn: delayed impact constraint}, and \textbf{2)}~given reasonable assumptions about the DI objectives, \algname is consistent.
%
%
To begin, we make an assumption related to the confidence intervals used to bound $g(\theta_c)$, where $\theta_c$ is returned by candidate selection.
Specifically, we assume that the requirements related to Student's $t$-test (Property~\ref{prop: student's ttest}) or Hoeffding's inequality (Property~\ref{prop: hoeffding}; see Appendix~\ref{app: bound di using hoeff}) are satisfied. 
%
Let $\mathrm{Avg}(Z){=}\frac{1}{n_Z}\!\sum_{i=1}^{n_Z}\!Z_i$ be the average of a size $n_Z$ vector $Z$. 
\begin{assumption}
    \label{ass: estimates match bound}
    If \texttt{Bound} is \texttt{Hoeff}, then for all $j \in\{1, ..., k\}$, each estimate in $\hat g_j$ (in Algorithm~\ref{alg: main algorithm multiple constraints}) is bounded in some interval $[a_j, b_j]$. If \texttt{Bound} is \texttt{ttest}, then each Avg($\hat g_j$) is normally distributed. 
\end{assumption}

\begin{theorem}
    \label{thm: fairness guarantee}
    Let $(g_j{})^k_{j=0}$ be a sequence of DI constraints, where $g_j: \Theta \rightarrow \mathbb R$, and let $(\delta_j)^k_{j=1}$ be a corresponding sequence of confidence levels, where each $\delta_j \in (0,1)$. If Assumptions~\ref{ass: markov property}, \ref{ass: switch prediction}, \ref{ass: restrict feasible set}, and \ref{ass: estimates match bound} hold, and if algorithm $a$ corresponds to Algorithm~\ref{alg: main algorithm multiple constraints}, then for all $j \in \{1, ..., k\}$,  $\Pr(g_j(a(D)) \leq 0) \geq 1-\delta_j$. \emph{\textbf{Proof.}} See Appendix~\ref{app: fairness guarantee}. \qed
\end{theorem}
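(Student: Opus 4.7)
The plan is to reduce the statement to a direct application of the chosen confidence interval, via a case analysis on the algorithm's output. Fix any $j \in \{1,\ldots,k\}$ and split
\[
\Pr(g_j(a(D)) > 0) = \Pr(g_j(a(D)) > 0,\, a(D){=}\NSF) + \Pr(g_j(a(D)) > 0,\, a(D){\neq}\NSF).
\]
The first term vanishes because of the stipulated convention $g_j(\NSF) = 0$. For the second term, the algorithm returns $\theta_c$ only when every upper bound (in particular $U_j$) is strictly below $0$, so the event $\{a(D) = \theta_c\}$ is contained in $\{U_j < 0\}$, and hence $\Pr(g_j(\theta_c) > 0,\, a(D){=}\theta_c) \leq \Pr(g_j(\theta_c) > U_j)$. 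It therefore suffices to show $\Pr(g_j(\theta_c) > U_j) \leq \delta_j$.

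The crux is to verify that the estimates entering $U_j$ are i.i.d.\ and unbiased for $g_j(\theta_c)$, so that Property~\ref{prop: student's ttest} (Student's $t$-test) or the Hoeffding analogue applies directly. The data partition on line~1 of Algorithm~\ref{alg: main algorithm} is what enables this: $\theta_c$ is a deterministic function of $D_c$ alone, and since $D$ is i.i.d., $D_f$ is independent of $\theta_c$. Conditioning on the event $\{\theta_c = \theta\}$, the entries of $D_f$ on which $c(X,Y,T)$ holds remain i.i.d.\ samples from the appropriate conditional distribution, and by Theorem~\ref{thm: is known behavior model} each corresponding summand of $\hat g_j$ is an unbiased estimate of $g_j(\theta)$ under Assumptions~\ref{ass: markov property}--\ref{ass: restrict feasible set}. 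Assumption~\ref{ass: estimates match bound} then validates the chosen confidence interval, giving $\Pr(g_j(\theta) > U_j \mid \theta_c = \theta) \leq \delta_j$. Marginalising over the distribution of $\theta_c$ yields the unconditional bound.

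The per-$j$ claim then follows by running this argument separately for each $j$; no union bound is needed because the theorem asserts a marginal guarantee per constraint rather than a joint one. I expect the main delicacy to be the conditioning step---specifically, arguing cleanly that the subsample of $D_f$ on which the importance-weighted estimator is evaluated is i.i.d.\ given $\theta_c$, despite the random selection induced by $c(X,Y,T)$ and the resulting random sample size. Once that is pinned down, the remainder is a mechanical composition of Theorem~\ref{thm: is known behavior model}, Assumption~\ref{ass: estimates match bound}, and the chosen confidence-interval property, together with the trivial \NSF\ case.
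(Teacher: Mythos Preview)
Your proposal is correct and follows essentially the same route as the paper: both split on whether $a(D)=\NSF$, use that returning $\theta_c$ forces $U_j\le 0$ to bound $\Pr(g_j(a(D))>0)$ by $\Pr(g_j(\theta_c)>U_j)$, and then invoke the i.i.d.\ unbiasedness of the importance-weighted estimates (via Theorem~\ref{thm: is known behavior model}) together with Assumption~\ref{ass: estimates match bound} to apply the chosen confidence interval. If anything, you are slightly more explicit than the paper about the conditioning on $\theta_c$ via the $D_c/D_f$ split, which the paper's Lemma~\ref{lem: delta-bound on error} leaves implicit.
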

%
\algname satisfies Theorem~\ref{thm: fairness guarantee} if the solutions it produces satisfy~\eqref{eqn: delayed impact constraint}, i.e., if $\forall j \in \{1, ..., k\}, \ \Pr(g_j(a(D)) \leq 0) \geq 1-\delta_j $, where $a$ is Algorithm~\ref{alg: main algorithm multiple constraints}.
Because Algorithm~\ref{alg: main algorithm multiple constraints} is an extension of Algorithm~\ref{alg: main algorithm} to multiple constraints, it suffices to show that Theorem~\ref{thm: fairness guarantee} holds for Algorithm~\ref{alg: main algorithm multiple constraints}. Next, we show that \algname is consistent, i.e., that when a fair solution exists, the probability that the algorithm returns a solution other than \NSF\, converges to $1$ as the amount of training data goes to infinity:
\begin{theorem}[Consistency guarantee]
    \label{thm: consistency}
    If Assumptions~\ref{ass: markov property}--\ref{ass: consistent loss estimator} (\ref{ass: piecewise Lipschitz}--\ref{ass: consistent loss estimator} are given in Appendix~\ref{app: consistency proof}) hold, then $\lim_{n\rightarrow \infty} \Pr(a(D) \neq \NSF, g(a(D)) \leq 0) = 1$. \emph{\textbf{Proof}}. \citet{metevier2019offline} provide a similar proof for a Seldonian contextual bandit algorithm. 
    %
    Appendix~\ref{app: consistency proof} adapts their proof.
\end{theorem}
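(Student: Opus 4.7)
The plan is to follow the adaptation from \citet{metevier2019offline} by showing that, as $n \to \infty$, two events co-occur with probability tending to one: \textbf{(i)} candidate selection returns a $\theta_c$ that is \emph{strictly} fair, in the sense that $g(\theta_c) < 0$; and \textbf{(ii)} the fairness-test upper bound $U$ computed on $D_f$ is negative, so that Algorithm~\ref{alg: main algorithm} returns $\theta_c$ rather than \NSF. On the intersection of these two events, $a(D) = \theta_c \neq \NSF$ and $g(a(D)) = g(\theta_c) \leq 0$ hold deterministically, so a union bound over their complements drives the probability of the failure set to zero.

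The concentration step (ii) is the easier of the two. By Theorem~\ref{thm: is known behavior model}, the entries appended to $\hat g$ in Algorithm~\ref{alg: main algorithm} are i.i.d.\ unbiased estimators of $g(\theta_c)$ conditional on $c(X,Y,T)$, so under Assumption~\ref{ass: estimates match bound} the half-width of the confidence interval shrinks like $O(1/\sqrt{|D_f|})$ (Hoeffding with bounded support, or Student's $t$ with an asymptotically-normal sample mean). Consequently, whenever $g(\theta_c) \leq -\epsilon$ for some fixed $\epsilon > 0$, $U \to g(\theta_c) < 0$ in probability as $|D_f| \to \infty$, which follows because Algorithm~\ref{alg: main algorithm} partitions $D$ in a fixed ratio.

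For step (i), I would show that the cost minimized in line~\ref{line: candidate selection} converges, as $n \to \infty$, to a deterministic function whose minimizer lies strictly inside the fair region. The loss term converges by Assumption~\ref{ass: consistent loss estimator}, and the fairness-penalty term, built from the same importance-sampling estimator, converges pointwise to $g(\theta)$ because its concentration half-width vanishes. The piecewise-Lipschitz condition of Assumption~\ref{ass: piecewise Lipschitz} then allows pointwise convergence to be upgraded to uniform convergence over $\Theta$ via a finite-cover argument, which in turn yields convergence of the arg-min. The ``nontrivial fair solution'' hypothesis guarantees that the limiting minimizer is strictly fair, so with probability tending to one $g(\theta_c) < -\epsilon$ for some $\epsilon > 0$ that depends on the problem but not on $n$.

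The main obstacle is the uniform convergence invoked in step (i): candidate selection optimizes over all of $\Theta$, so a pointwise SLLN is not enough, and one must handle the fact that the penalty used inside \texttt{cost} is a nonlinear functional (involving a sample standard deviation, in the $t$-test case) of importance-weighted terms whose variance can be large when $\beta(X, \widehat Y^\beta)$ is small. The piecewise-Lipschitz assumption is what makes a covering argument workable, and adapting \citeauthor{metevier2019offline}'s treatment of this step to the delayed-impact setting---where the reward in the importance-sampling estimator is the observed DI rather than an immediate classification loss---is essentially the only substantive modification required.
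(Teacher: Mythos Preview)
Your two-step decomposition and the tools you invoke (unbiasedness via Theorem~\ref{thm: is known behavior model}, shrinking confidence half-widths under Assumption~\ref{ass: estimates match bound}, and a covering argument from Assumption~\ref{ass: piecewise Lipschitz}) match the paper's proof. The one place your route diverges is in step~(i): you aim for uniform convergence of the cost over \emph{all} of $\Theta$ and then ``convergence of the arg-min'' to a fair limiting minimizer. The paper does something more economical. It fixes $\bar\Theta \coloneqq \{\theta:g(\theta)>-\xi/2\}$, establishes pointwise almost-sure convergence $\hat c(\theta^\star,D_c)\to \ell(\theta^\star)$ only at the single fair point $\theta^\star$ supplied by Assumption~\ref{ass: existence of fair policy}, and uses the covering argument to get uniform almost-sure convergence $\hat c(\theta,D_c)\to \ell_{\max}+g(\theta)$ only over $\bar\Theta$. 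Because the limiting cost on $\bar\Theta$ strictly exceeds $\ell(\theta^\star)$, candidate selection eventually avoids $\bar\Theta$; no claim is made that $\theta_c$ itself converges. This buys two things your version would need to work around: it dodges the discontinuity of $\hat c$ at the branch switch $U^+=-\xi/4$ (which obstructs uniform convergence on the full $\Theta$), and it removes any need for uniqueness of the limiting minimizer that an arg-min-convergence statement would otherwise require. Your awareness of the uniform-convergence obstacle in the final paragraph is on point; the paper's fix is precisely to restrict the uniform step to $\bar\Theta$ rather than all of $\Theta$.
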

%
%

To prove Theorem~\ref{thm: consistency}, we make a few mild assumptions: 
\textbf{1)} a weak smoothness assumption about the output of Algorithm~\ref{alg: cost function}; 
\textbf{2)} that at least one fair solution exists that is not on the fair-unfair boundary; and \textbf{3)} that the sample loss in Algorithm~\ref{alg: cost function} converges almost surely to the true loss. 
In Appendix \ref{app: assumption intuitions}, we provide a high-level discussion on the meaning and implications of Assumptions~1--8. Our goal is to show that these assumptions are standard in the ML literature and reasonable in many real-life settings. 

\section{Empirical evaluation}
\label{sec: experiments}
We empirically investigate three research questions: \textbf{RQ1:}~Does \algname enforce DI constraints with high probability, while existing fairness-aware algorithms do not? \textbf{RQ2:}~What is the cost of enforcing DI constraints? \textbf{RQ3:}~How does \algname perform when predictions have little influence on DI relative to other factors outside of the model's control? 


\begin{figure*}[b!!!]
\centering
\includegraphics[width=\textwidth]{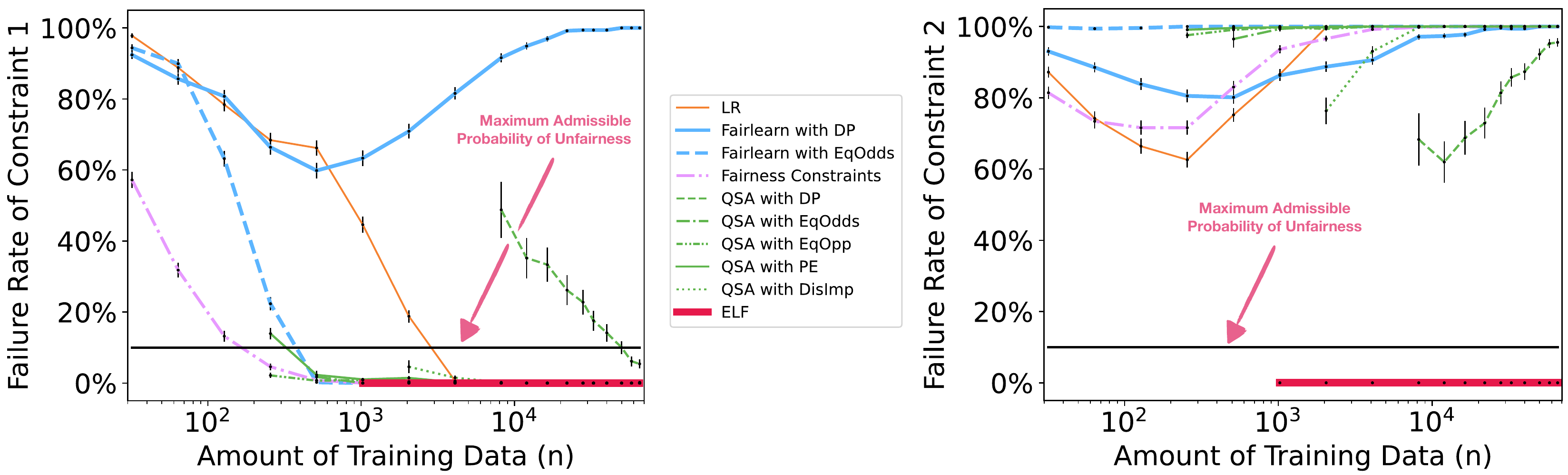}
\caption{Algorithms' failure rates with respect to the DI constraints associated with White people (on the left) and Black people (on the right), as a function of $n$. The black horizontal lines indicate the maximum admissible probability of unfairness, $\delta_0=\delta_1=10\%$.}
\label{fig:comparison-failureRate}
\end{figure*}

We consider a classifier tasked with making predictions about people in the United States foster care system; for example, whether youth currently in foster care are likely to get a job in the near future. These predictions may have a delayed impact on the person's life if, for instance, they 
influence whether that person receives additional financial aid. 
Here, the goal is to ensure that a trained classifier is fair with respect to DI when considering race.
Our experiments use two data sources from the National Data Archive on Child Abuse and Neglect~\citep{NDACAN2021} that do not include personally identifiable information or offensive content:
\textbf{1)}~the Adoption and Foster Care Analysis and Reporting System---a
dataset containing demographic and foster care-related information about youth; and \textbf{2)}~the National Youth in Transition Database (Services and Outcomes)---a dataset containing information about the financial and educational status and well-being of youth over time and during their transition from foster care to independent adulthood.

In this setting, the feature vector $X$ contains five attributes related to the job and educational status of a person in foster care. 
The sensitive attribute, $T$, corresponds to race---whether a person identifies as White or Black. The classifier is tasked with predicting a binary label, $Y$, denoting whether a person has a full-time job after leaving the program. 
We modify this dataset by resampling the original labels in a way that ensures the likelihood of being employed after leaving the program depends on \emph{both} $X$ and $T$. 
The behavior model, $\beta$, corresponds to a logistic regression classifier.

To investigate our research questions, we evaluate \algname in settings where predictions made by a classifier may have different levels of influence on DI. 
We model such settings by constructing a parameterized and synthetic definition of DI.
Let $I_i^\psi$ be the delayed impact for 
person $i$ 
if a classifier, $\psi$, outputs the prediction $\widehat Y^\psi_i$ given $X_i$. Here, $\psi(x, \hat y) \coloneqq \Pr(\widehat Y_i^\psi{=}\hat y|X_i{=}x)$. We define $I_i^\psi$ as
\begin{equation}
\label{eqn: delayed impact experiments}
    I_i^\psi =
    \begin{cases} 
    \alpha\widehat Y_i^\psi + (1-\alpha)\mathcal{N}(2,0.5)\,\,\,\,\,\,\text{if}\,\,\,T_i = 0\\
    \alpha\widehat Y_i^\psi + (1-\alpha)\mathcal{N}(1,1)\,\,\,\,\,\,\,\,\,\,\,\text{if}\,\,\,T_i = 1,
    \end{cases}
\end{equation}
where $\alpha$ regulates whether DI is strongly affected by a classifier's predictions or if predictions have little to no delayed impact.
We refer to the former setting as one with \emph{high prediction-DI dependency} and to the latter setting as one with \emph{low prediction-DI dependency}. 
As $\alpha$ goes to zero, predictions made by the classifier have no DI. 
In our experiments, we vary $\alpha$ from zero to one in increments of $0.1$; for each value of $\alpha$, we construct a corresponding dataset by using~\eqref{eqn: delayed impact experiments} to assign a measure of DI to each instance in the foster care dataset. When doing so, $\psi$ is defined as the behavior model, $\beta$.

We wish to guarantee with high probability that the DI caused by a new classifier, $\pitheta$, is better than the DI resulting from the currently deployed classifier, $\beta$. This guarantee should hold simultaneously for both races: White (instances where $T=0$) and Black (instances where $T=1$).
We model this requirement via two DI objectives, $g_0$ and $g_1$. Let $t \in \{0,1\}$ and  $g_t(\theta) \coloneqq \tau_t - \mathbf E[I^\pitheta | T = t]$, where $\tau_t = \frac{1}{n_t}\sum_{d=1}^n I^\beta_d\llbracket T_d = t \rrbracket$ is the average DI caused by $\beta$ on people of race $T=t$ and where $n_t=\sum_{d=1}^n\llbracket T_d = t \rrbracket$.
%
The confidence levels $\delta_0$ and $\delta_1$ associated with these objectives are set to $0.1$. 


\textbf{RQ1: Preventing delayed-impact unfairness.}
%
To study RQ1, we evaluate whether \algname can prevent DI unfairness with high probability, and whether existing algorithms fail.
\algname is the \emph{first} method capable of ensuring delayed-impact fairness in the model-free setting, i.e., when an analytical model describing the relationship between predictions and DI is not known and the algorithm only has access to historical data. 
To the best of our knowledge, no other method in the literature ensures DI fairness in this setting (see Section~\ref{sec: related work}). Existing methods that ensure static fairness definitions, however, can be applied in this setting---i.e., when only historical data is available. Thus, we compare \algname with three state-of-the-art fairness-aware algorithms that are designed to enforce static constraints: \textbf{1)}~Fairlearn~\citep{agarwal2018reductions}, \textbf{2)}~Fairness Constraints~\citep{zafar2017fairness}, and \textbf{3)}~quasi-Seldonian algorithms (QSA)~\citep{thomas2019preventing}. We consider five static fairness constraints: demographic parity~(DP), equalized odds~(EqOdds), disparate impact~(DisImp), equal opportunity~(EqOpp), and predictive equality~(PE)~\citep{chouldechova2017fair,dwork2012fairness,hardt2016equality}.\footnote{We task each competing method with enforcing the constraints analyzed in those methods' original papers.}
We also compare \algname with a fairness-unaware algorithm: logistic regression~(LR).


\begin{figure*}[t!!!]
\centering
\begin{subfigure}{0.4\columnwidth}
  \centering
  \includegraphics[width=\columnwidth]{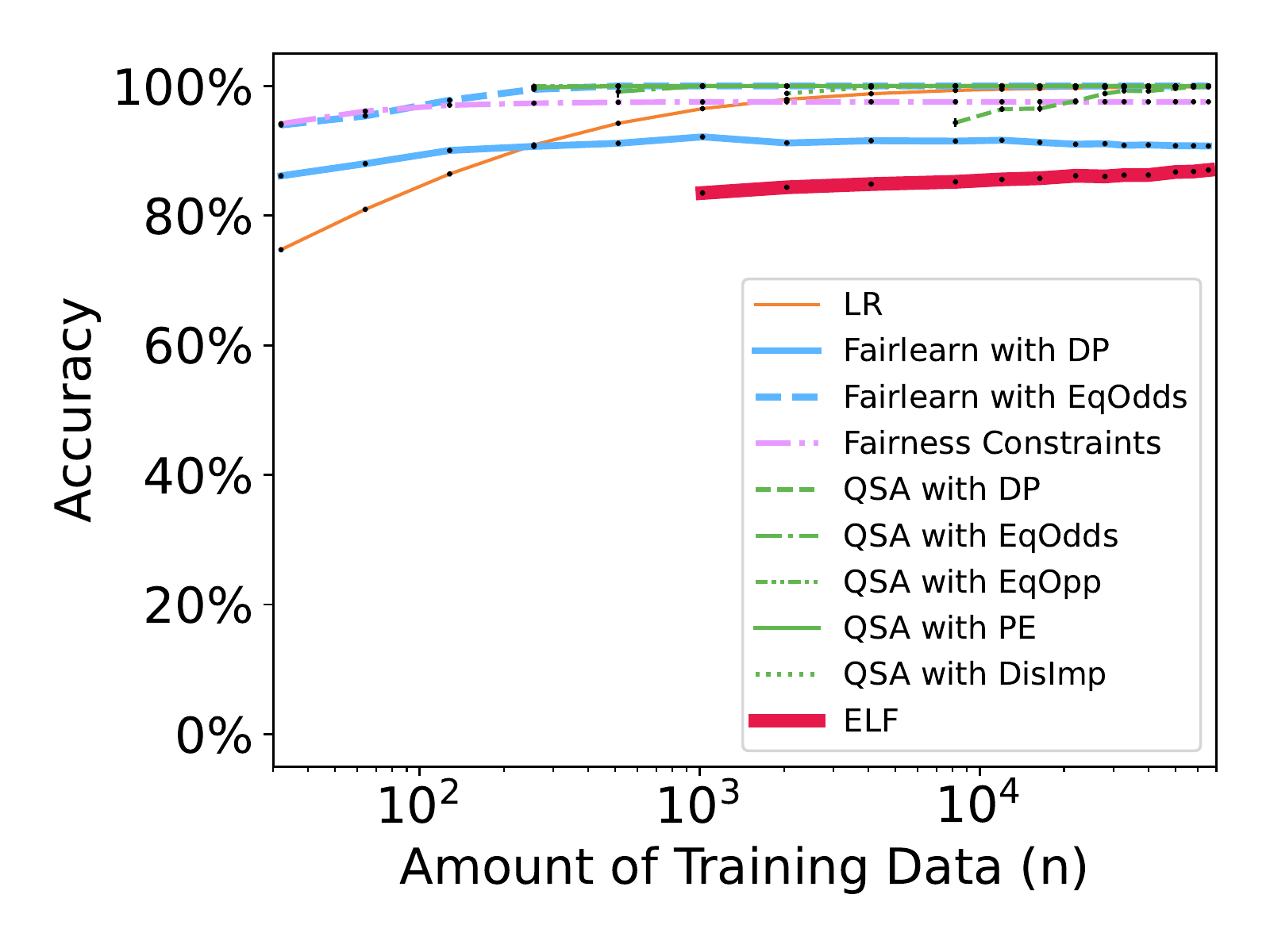}
  \caption{Accuracy.}
  \label{fig:comparison-accuracy}
\end{subfigure}
\begin{subfigure}{0.4\columnwidth}
  \centering
  \includegraphics[width=\columnwidth]{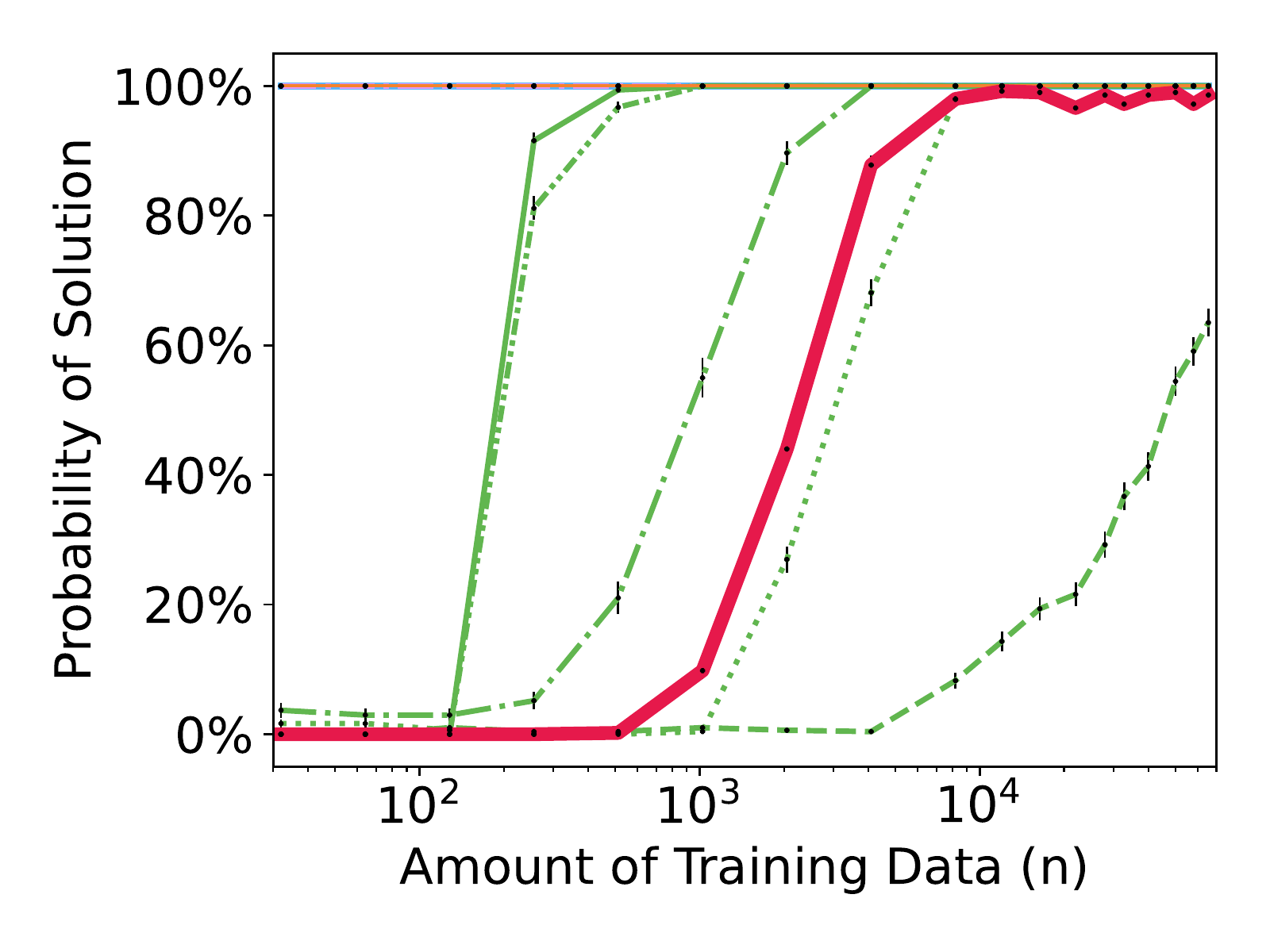}
  \caption{Probability of returning solution.}
  \label{fig:comparison-probSolution}
\end{subfigure}
\caption{On the left, the accuracy of solutions returned by algorithms (subject to different fairness constraints) as a function of $n$. On the right, the probability that these algorithms return a solution.}
\label{fig:comparison-PSandACC}
\end{figure*}

\looseness-1
This section studies how often each fairness-aware algorithm 
returns an unfair model (with respect to the DI constraints) as a function of the amount of training data, $n$.
We refer to the probability that an algorithm returns an unfair model as its \emph{failure rate}. To measure the failure rate, we compute how often the classifiers returned by each algorithm are unfair when evaluated on a significantly larger dataset, to which the algorithms do not have access during training time. To investigate how failure rates are influenced by the level of prediction-DI dependency, we vary $\alpha$ between 0 and 1.
Due to space constraints, below we discuss one representative experiment conducted by setting $\alpha=0.9$. The qualitative behavior of all algorithms for other values of $\alpha$ is similar. The complete set of results and details about our \algname implementation can be found in Appendix~\ref{app: experiments}.

Figure~\ref{fig:comparison-failureRate} presents the failure rate of each algorithm as a function of the amount of training data. We computed all failure rates and corresponding standard errors over 500 trials.
Notice that the solutions returned by \algname are \emph{always fair} with respect to the DI constraints.\footnote{\algname does not return solutions
with 
$n<1,\!000$ 
because it cannot ensure DI fairness with high confidence.}
This is consistent with \algname's theoretical guarantees, which ensure with high probability that the solutions it returns satisfy all fairness constraints. 
Existing methods that enforce static fairness criteria, by contrast, either \textbf{1)}~\emph{always fail} to satisfy both DI constraints;
or \textbf{2)}~\emph{always fail} to satisfy one of the DI constraints---the one related to delayed impact on Black youth---while often failing to satisfy the other constraint. 

\underline{RQ1}: \emph{Our experiment supports the hypothesis that with high probability \algname is fair with respect to DI objectives, and that comparable fairness-aware techniques do not ensure delayed impact fairness.}


\paragraph{RQ2: The cost of ensuring delayed-impact fairness.}

Previously, we 
showed that \algname is capable of satisfying DI constraints with high probability. But depending on the data domain, this may come at a cost.
First, there may be a trade-off between satisfying fairness constraints and optimizing accuracy. 
In Appendix~\ref{app: extensions} we show how \algname can be tasked with satisfying DI fairness constraints while also bounding accuracy loss.
%
Here, we investigate the impact that enforcing DI constraints has on accuracy.
Figure~\ref{fig:comparison-accuracy} presents the accuracy of classifiers returned by different algorithms as a function of $n$.\footnote{As before, we use $\alpha{=}0.9$ and 500 trials. Results for other values of $\alpha$ are in Appendix~\ref{app: experiments}.}
%
In these experiments, we bound accuracy loss via an additional constraint requiring that \algname's solutions have accuracy of at least 75\%.
Under low-data regimes ($n{=}1,\!000$), \algname's accuracy is 83\%, while competing methods (with no DI fairness guarantees) have accuracy higher than 90\%.  
%
Importantly, however, notice that whenever competing methods have higher accuracy than ours, they \emph{consistently return unfair solutions}. \algname, by contrast, ensures that \emph{all fairness constraints are satisfied} with high probability and always returns solutions with accuracy above the specified threshold (see Figure~\ref{fig:comparison-failureRate}). 
Furthermore, notice that as $n$ 
increases, \algname's accuracy approaches that of the other techniques.

Second, there may be a trade-off between the amount of training data and the confidence that a fair solution has been identified. 
Recall that some algorithms (including ours) may not return a solution if they cannot ensure fairness with high confidence. 
%
%
Here we study how often each algorithm identifies and returns a candidate solution as a function of $n$. 
Figure~\ref{fig:comparison-probSolution} shows that \algname returns solutions with $91\%$ probability when given just $n=4,\!096$ training instances. 
As 
$n$ increases, the probability of \algname returning solutions increases rapidly. Although three competing techniques (Fairlearn, Fairness Constraints, and LR) always return solutions, independently of the amount of training data, these solutions \emph{never satisfy both DI constraints} (see Figure~\ref{fig:comparison-failureRate}).
QSA enforcing static fairness constraints often returns candidate solutions with less training data than \algname; these solutions, however, also \emph{fail to satisfy both DI constraints} simultaneously.

\underline{RQ2}: \emph{While there is a cost to enforcing DI constraints depending on the data domain, \algname succeeds in its primary objective: to ensure DI fairness with high probability, without requiring unreasonable amounts of data, while also bounding accuracy loss.}

\paragraph{RQ3: Varying prediction-DI dependency.}

\begin{figure*}[t!!!]
\centering
\begin{subfigure}{0.32\columnwidth}
  \centering
  \includegraphics[width=\columnwidth]{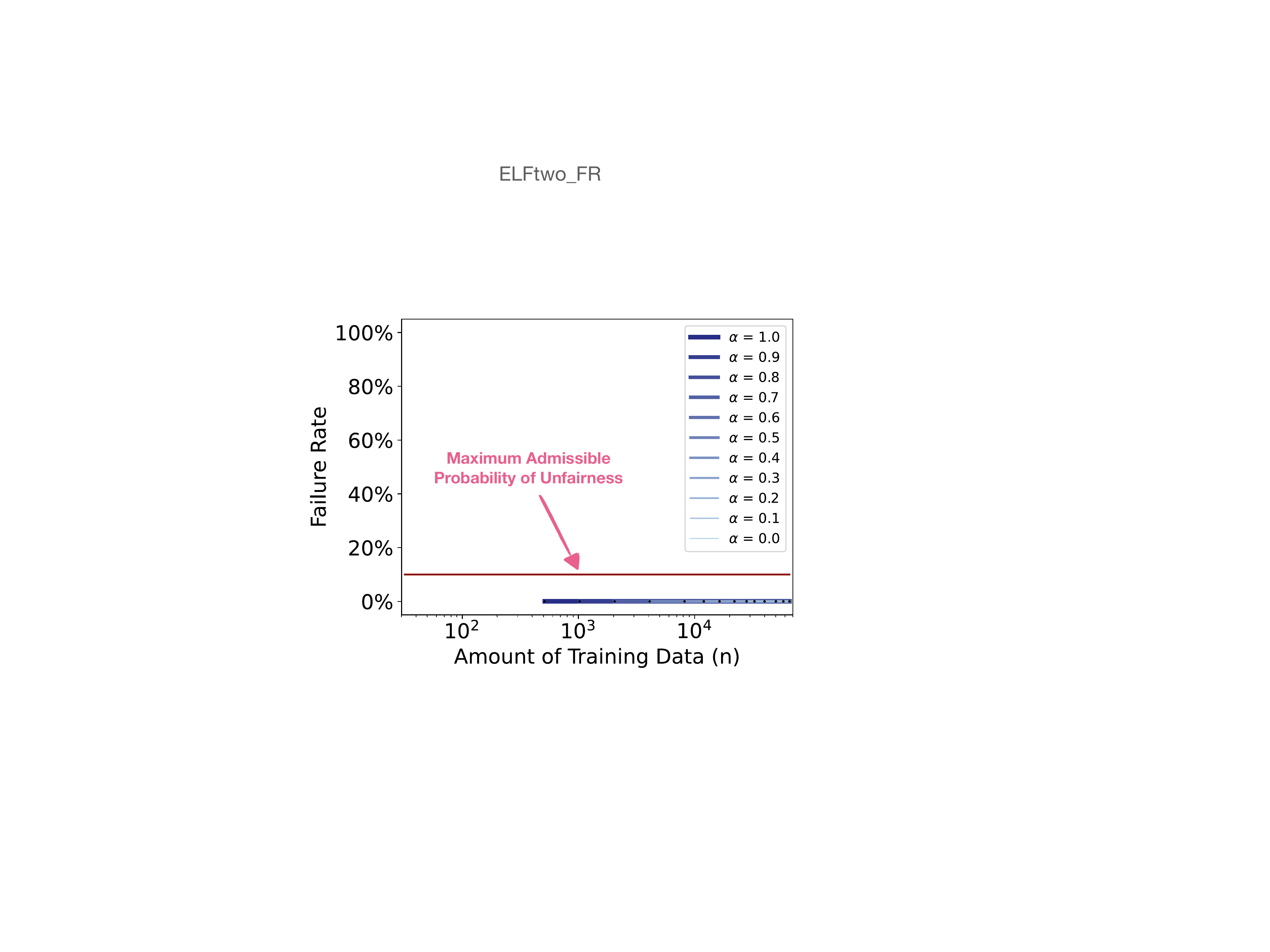}
  \caption{Failure rate.}
  \label{fig:ELFtwo-failureRate}
\end{subfigure}
\begin{subfigure}{0.32\columnwidth}
  \centering
  \includegraphics[width=\columnwidth]{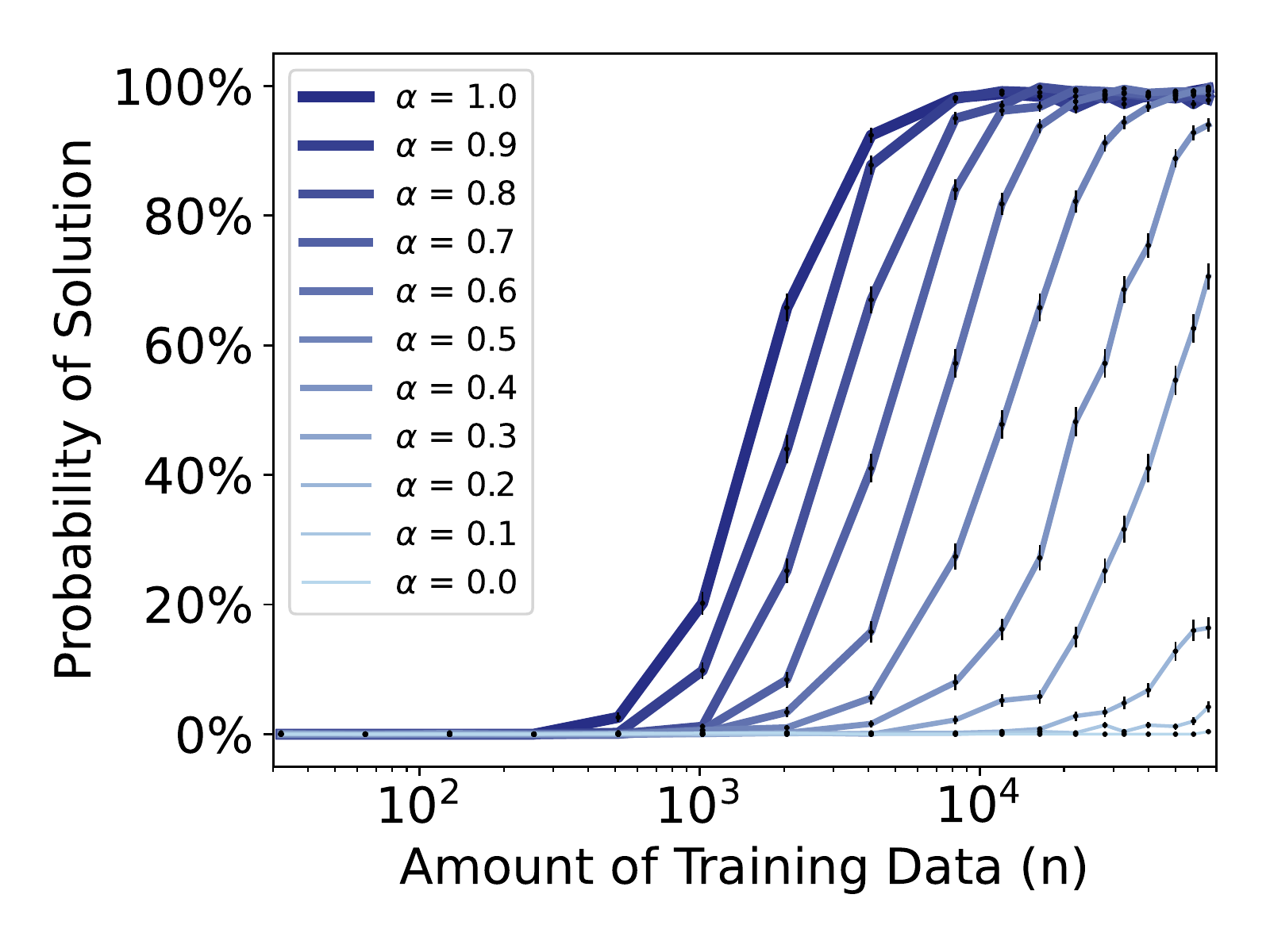}
  \caption{Probability of returning solution.}
  \label{fig:ELFtwo-probSolution}
\end{subfigure}
\begin{subfigure}{0.32\columnwidth}
  \centering
  \includegraphics[width=\columnwidth]{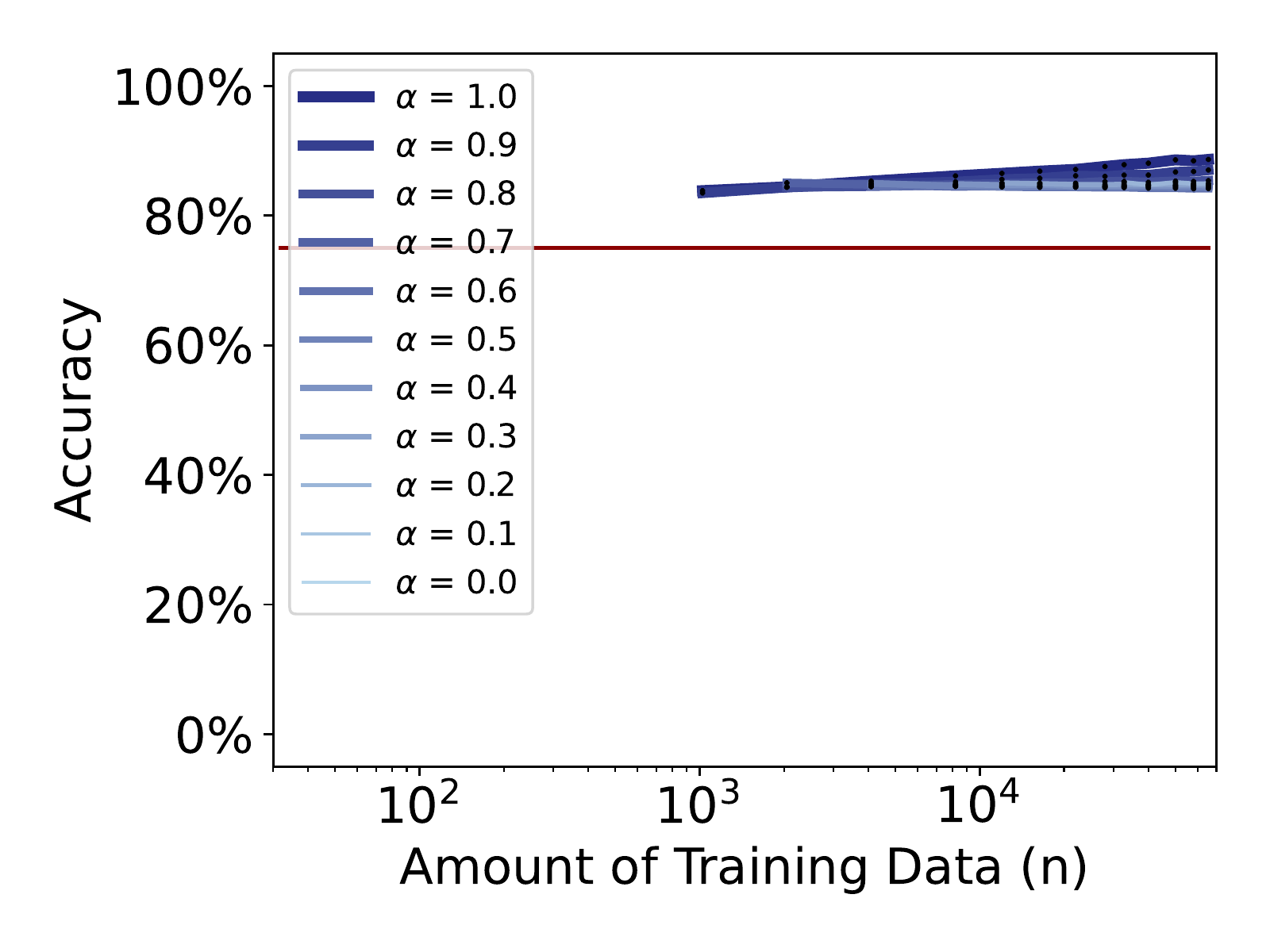}
  \caption{Accuracy.}
  \label{fig:ELFtwo-accuracy}
\end{subfigure}
\caption{\algname's performance in settings with different levels of prediction-DI dependency, as a function of $n$.}
\label{fig:ELFtwo}
\end{figure*}


Finally, we investigate \algname's performance (in terms of failure rate, probability of returning solutions, and accuracy) in settings with varied levels of prediction-DI dependency. These include challenging cases where predictions have little influence on DI relative to other factors outside of the model's control.

We first study \algname's failure rate (probability that the returned model is unfair) for different values of $\alpha$. 
Figure~\ref{fig:ELFtwo-failureRate} shows that \algname \emph{never} returns unfair models, independent of $\alpha$, confirming empirically that \algname's high-probability fairness guarantees hold in settings with a wide range of qualitatively different DI characteristics. 
Next, we investigate how often \algname identifies and returns a solution for various values of $\alpha$.
We expect that if the predictions made by a classifier have little to no DI (i.e., for low values of $\alpha$), it becomes harder---if not impossible---for \algname to be confident that it has identified a model satisfying all DI constraints. This is because it becomes harder to separate the small delayed impact of predictions from noise. 
%
In these cases, we expect it would be less likely for \algname to return solutions.
Figure~\ref{fig:ELFtwo-probSolution} illustrates this behavior. Notice that \algname returns solutions under all $\alpha>0$ if given sufficient training data. However, as expected, the probability that it returns a solution decreases as $\alpha$ approaches zero.
Lastly, we investigate how the amount of prediction-DI dependency affects the accuracy of \algname's solutions.
Figure~\ref{fig:ELFtwo-accuracy} shows the model accuracy, for various values of $\alpha$, as a function of $n$. The accuracy trade-off is more evident when \algname must satisfy challenging DI objectives, as $\alpha$ approaches zero. In such cases, accuracy decreases from 90\% to 84\%. Importantly, however, even though a trade-off exists, notice that our method is successful at bounding the accuracy of returned solutions while ensuring DI fairness constraints with high confidence.

\underline{RQ3}: \emph{Our experiments confirm that \algname performs well in a wide range of settings, with various levels of prediction-DI dependency. 
Even though ensuring fairness may impact accuracy and the probability of finding solutions, these unavoidable, domain-specific trade-offs do not affect \algname's fairness guarantees. In our experiments, all returned models satisfy both DI constraints.}


\section{Related work}
\label{sec: related work}
Most prior work on the social implications of ML study static fairness without considering the long-term impact of model decisions \citep{calders2009building, zafar2017fairness, hardt2016equality, dwork2012fairness}.
However, there exists a growing body of work that examines the long-term impact of fairness in ML \cite{d2020fairness, hu2018short, hu2018welfare, liu2018delayed, heidari2019long, zhang2020long, mouzannar2019fair}. 
 In this paper, we build upon this prior work and present the first method that uses historical data to train a classifier with high-confidence delayed-impact fairness guarantees when the analytic model of the relationship between model predictions and delayed impact is not known \emph{a priori}.

\citet{wen2021algorithms} and~\citet{tang2021bandit} present work similar to ours.
%
\citet{wen2021algorithms} propose modeling delayed impact using a Markov decision process (MDP) with two different reward functions: one for the decision-maker, e.g., a bank, and another for each individual, e.g., loan applicant. \citet{wen2021algorithms} introduce algorithms that are able to estimate near-optimal policies (in terms of cumulative reward of the decision-maker) while enforcing static fairness constraints (e.g., demographic parity and equal opportunity). Importantly, \citet{wen2021algorithms} introduce a method that ensures that \emph{static} fairness constraints hold for all time during a sequence of decisions. We, by contrast, study the orthogonal problem of ensuring fairness with respect to user-defined \emph{delayed-impact} measures. 
%
%
The method proposed by \citet{tang2021bandit}, unlike \algname, considers the online multi-armed bandit learning setting in which there are no features---the algorithm does not differentiate individuals within a group while making decisions. We, by contrast, tackle the problem of high-confidence fairness in the classification setting.

Work by~\citet{d2020fairness, ge2021towards}, and~\citet{hu2022achieving} study similar but orthogonal problem settings. 
%
%
\citet{ge2021towards} and \citet{hu2022achieving} train classifiers that satisfy static fairness constraints in non-stationary settings (e.g., a recommendation system where a person's interests may change over time). Importantly, both \citet{ge2021towards} and~\citet{hu2022achieving} require prior knowledge of analytic, accurate models of the environment---for example, in the form of probabilistic graphical models. Our method, by contrast, does not require a model or simulator of the environment, nor prior knowledge about the relationship between a classifier's predictions and the resulting delayed impact. 
%
%
\citet{d2020fairness}'s goal is not to propose a new method; instead, they \emph{evaluate} the DI resulting from a given classifier's predictions, under the assumption that an accurate simulator of the environment is available. By contrast, we propose a new method for \emph{training} classifiers that ensure that DI fairness constraints are satisfied, without requiring accurate simulators. 

In another line of work, researchers have shown that the fairness of ML algorithms can be improved by manipulating the training data, rather than the learning algorithm. 
This goal can be achieved, for example, by removing data that violates fairness properties~\citep{Verma21} or by inserting data inferred using fairness properties~\citep{Salimi19}. 
Again, while these methods can improve the fairness of learned models, they were designed to enforce static fairness constraints and do not enforce fairness with respect to the delayed impact resulting from deploying such learned models. 

Lastly, this paper introduces a method that extends the existing body of work on \textit{Seldonian algorithms~\citep{thomas2019preventing}}.
Seldonian algorithms provide fairness guarantees with high probability and have been shown to perform well in real-world applications given reasonable amounts of training data \citep{thomas2019preventing, metevier2019offline}.
They also---by construction---provide a straightforward way for users to define multiple notions of fairness that can be simultaneously enforced \citep{thomas2019preventing}. 
%
%
The technique introduced in this paper (\algname) is the first supervised-learning Seldonian algorithm capable of providing high-probability fairness guarantees in terms of delayed impact.

\subsection*{Acknowledgments}
Research reported in this paper was sponsored in part by gifts and grants from Adobe, Meta Research, Google, the National Science Foundation award no. 2018372,  the U.S. National Science Foundation under grant no. CCF-1763423, and the DEVCOM Army Research Laboratory under Cooperative Agreement W911NF-17-2-0196 (ARL IoBT CRA). The views and conclusions contained in this document are those of the authors and should not be interpreted as representing the official policies, either expressed or implied, of the Army Research Laboratory or the U.S. Government. The U.S. Government is authorized to reproduce and distribute reprints for Government purposes notwithstanding any copyright notation herein.

\bibliographystyle{plainnat}
\bibliography{references}

\newpage 
\appendix

\section{Proof of Theorem~\ref{thm: is known behavior model}}
\label{app: proof of unbiased is estimate}
\begin{proof}
At a high level, we start with $\mathbf E[\hat I^{\pi_{\theta}}|c(X,Y,T)]$ and, through a series of transformations involving substitution, laws of probability, and Assumptions~\ref{ass: markov property}--\ref{ass: support}, obtain $\mathbf E[I^{\pi_{\theta}}|c(X,Y,T)]$. To simplify notation, throughout this proof we let $C=c(X,Y,T)$. 
Also, for any random variable $Z$, let $\operatorname{supp}(Z)$ denote the support of $Z$ (e.g., if $Z$ is discrete, then $\operatorname{supp}(Z) = \{z: \Pr(Z=z) > 0\}$).
To begin, we substitute the definition of $\hat I^{\pi_{\theta}}$ and expand this expression using the definition of expected value:
\begin{align}
    \mathbf{E}[\hat I^{\pi_{\theta}}|C] =& \mathbf{E}\left [\frac{\pi_{\theta}(X, \widehat Y^{\beta})}{\beta(X, \widehat Y^{\beta})}I^{\beta}\middle |C\right ]\\
    %
    =& \smashoperator{\sum_{(x,y,t,\hat y, i) \in \operatorname{supp}(X,Y,T,\widehat Y^\beta, I^\beta)}} \Pr(X{=}x,Y{=}y,T{=}t, \widehat Y^{\beta}{=}\hat y, I^{\beta}{=}i|C) \frac{\pi_{\theta}(x, \hat y)}{\beta(x, \hat y)}i.\label{eqn: is thm exp of Ihat pi c}
    %
\end{align} 
Using the chain rule repeatedly, we can rewrite the joint probability in~\eqref{eqn: is thm exp of Ihat pi c} as follows:
\begin{align}
    &\Pr(X{=}x,Y{=}y,T{=}t, \widehat Y^{\beta}{=}\hat y,  I^{\beta}{=}i|C)\\
    =&\Pr(I^{\beta}{=}i|X{=}x,Y{=}y,T{=}t, \widehat Y^{\beta}{=}\hat y,C) \Pr(X{=}x,Y{=}y,T{=}t, \widehat Y^{\beta}{=}\hat y | C)\\
    =& \Pr(I^{\beta}{=}i|X{=}x,Y{=}y,T{=}t, \widehat Y^{\beta}{=}\hat y,C)\Pr(\widehat Y^{\beta}{=}\hat y | X{=}x,Y{=}y,T{=}t, C)\Pr(X{=}x,Y{=}y,T{=}t| C).
\end{align}
%
%
Under Assumption~\ref{ass: markov property}, $\Pr(\widehat Y^{\beta}{=}\hat y|X{=}x,T{=}t,Y{=}y,C) = \Pr(\widehat Y^{\beta}{=}\hat y|X{=}x)$, which is the definition of $\beta(x,\hat y)$. 
We perform this substitution and simplify by cancelling out the $\beta$ terms: 
\begin{align}
    \mathbf{E}[\hat I^{\pi_{\theta}}|C] =&
     \smashoperator{\sum_{(x,y,t,\hat y,i) \in \operatorname{supp}(X, Y, T, \widehat Y^\beta,I^\beta)}} \Pr \left (I^{\beta}{=}i|X{=}x,Y{=}y,T{=}t, \widehat Y^{\beta}{=}\hat y,C \right )\beta(x,\hat y)\Pr \left (X{=}x,Y{=}y,T{=}t|C\right)   \frac{\pi_{\theta}(x, \hat y)}{\beta(x, \hat y)}i\\%
    %
    %
    =& \smashoperator{\sum_{(x,y,t,\hat y,i) \in \operatorname{supp}(X, Y, T, \widehat Y^\beta, I^\beta)}}  \Pr\left (I^{\beta}{=}i|X{=}x,Y{=}y,T{=}t, \widehat Y^{\beta}{=}\hat y,C\right )\Pr\left(X{=}x,Y{=}y,T{=}t|C\right) \pi_{\theta}(x,\hat y)i.\label{eqn: is thm simplification}
\end{align}
Note that under Assumption~\ref{ass: markov property}, $\pi_{\theta}(x,\hat y)$ can be rewritten as $\Pr(\widehat Y^{\pi_{\theta}} {=} \hat y|X{=}x, Y{=}y, T{=}t, C)$. Using the multiplication rule of probability, we can combine this term with the  $\Pr(X{=}x,Y{=}y,T{=}t|C)$ term in~\eqref{eqn: is thm simplification} to obtain the joint probability $\Pr(X{=}x,Y{=}y,T{=}t,\widehat Y^{\pi_{\theta}}{=}\hat y|C)$. 
By Assumption~\ref{ass: switch prediction}, we can substitute $\Pr(I^{\beta}{=}i|X{=}x,Y{=}y,T{=}t, \widehat Y^{\beta}{=}\hat y,C)$ for $\Pr(I^{\pi_{\theta}}{=}i|X{=}x,Y{=}y,T{=}t, \widehat Y^{\pi_{\theta}}{=}\hat y,C)$. We substitute these terms into~\eqref{eqn: is thm simplification} and apply the multiplication rule of probability once more:
\begin{align}
    \mathbf{E}[\hat I^{\pi_{\theta}}|C] =& \smashoperator{\sum_{(x,y,t,\hat y,i) \in \operatorname{supp}(X, Y, T, \widehat Y^\beta, I^\beta)}}  \Pr(I^{\pi_{\theta}}{=}i|X{=}x,Y{=}y,T{=}t, \widehat Y^{\pi_{\theta}}{=}\hat y,C)\Pr(X{=}x,Y{=}y,T{=}t,\widehat Y^{\pi_{\theta}}=\hat y|C) i\\
    =& \smashoperator{\sum_{(x,y,t,\hat y,i) \in \operatorname{supp}(X, Y, T, \widehat Y^\beta, I^\beta)}} \Pr(X{=}x,Y{=}y,T{=}t, \widehat Y^{\beta}{=}\hat y, \hat I^{\pi_{\theta}}{=}i|C) i
    \label{eqn: switch support}.
\end{align}
Finally, notice that by Assumption~\ref{ass: support}, $\text{supp}(\widehat Y^{\pitheta}) \subseteq \text{supp}(\widehat Y^\beta)$, and so $\text{supp}(I^{\pitheta}) \subseteq \text{supp}(I^\beta)$. So, we can rewrite~\eqref{eqn: switch support} as
\begin{equation}
    \smashoperator{\sum_{(x,y,t,\hat y,i) \in \operatorname{supp}(X, Y, T, \widehat Y^{\pitheta}, I^{\pitheta})}} \Pr(X{=}x,Y{=}y,T{=}t, \widehat Y^{\pi_{\theta}}{=}\hat y, \hat I^{\pi_{\theta}}{=}i|C) i.
\end{equation}
By the definition of expectation, this is equivalent to $\mathbf{E}\left [ I^{\pi_{\theta}} | C\right].$ Therefore, we have shown that $\mathbf{E}[\hat I^{\pi_{\theta}}|C]\!=\! \mathbf{E}[I^{\pi_{\theta}}|C]$.  
\end{proof}


\section{Bounds on delayed impact using Hoeffding's inequality}
\label{app: bound di using hoeff}
This section focuses on how one can use the unbiased estimates of $g(\theta)$ together with
Hoeffding's inequality \citep{hoeffding1963probability} to derive high-confidence upper bounds on $g(\theta)$. 
Given a vector of $m$ i.i.d.~samples $(Z_i)^m_{i=1}$ of a random variable $Z$, let $\bar Z = \frac{1}{m}\sum^m_{i=1} Z_i$ be the sample mean, and let $\delta \in (0,1)$ be a confidence level. 
\begin{property}[Hoeffding's Inequality]
    \label{prop: hoeffding}
    If $\Pr(Z \in [a,b])=1$, then 
    \begin{equation}
        \Pr \left(\mathbf E[Z_i] \geq \bar Z - (b-a) \sqrt{\frac{\ln (1/\delta)}{2m}}\right) \geq 1-\delta.
    \end{equation}
    \emph{\textbf{Proof.}} See the work of \citet{hoeffding1963probability}. \qed
\end{property}
Property~\ref{prop: hoeffding} can be used to obtain a high-confidence upper bound for the mean of $Z$:
\begin{equation}
    U_{\texttt{Hoeff}}(Z_1, Z_2, ..., Z_m) \coloneqq \bar Z + (b{-}a)\sqrt{\frac{\log(1/\delta)}{(2m)}}.
\end{equation}
Let $\hat g$ be a vector of i.i.d.~and unbiased estimates of $g(\theta)$. Once these are procured (using importance sampling as described in Section~\ref{sec: deriving estimates of delayed impact}), they can be provided to $U_\texttt{Hoeff}$ to derive a high-confidence upper bound on $g(\theta)$: 
\begin{equation}
    \Pr\left(\tau - \mathbf E\left [\hat I^{\pi_{\theta}} \middle | c(X, Y, T) \right] \leq U_\texttt{Hoeff}(\hat g)\right) \geq 1-\delta.
\end{equation}

Notice that using Hoeffding's inequality to obtain the upper bound requires the assumption that $\hat g$ is bounded. 

\section{Extensions of \algname}
\label{app: extensions}
In this section we discuss how \algname can be extended to provide similar high-confidence guarantees for the regression setting, for the classification setting with static fairness constraints, and for definitions of delayed impact beyond the form assumed in~\eqref{eqn: delayed impact objective}.
%

\subsection{Delayed-impact fairness guarantees in the regression setting}

In our problem setting, we study fairness with respect to delayed impact in the classification setting, in which the labels $Y$ produced by a model are discrete. However, our method can also be applied in the regression setting, where a (stochastic) regression model produces \emph{continuous} predictions $Y$, instead of discrete labels. To use \algname in this setting, one may adapt  Algorithm~\ref{alg: cost function} so that it uses a loss function suitable for regression; e.g., sample mean squared error. Furthermore, notice that the importance sampling technique described in Section~\ref{sec: deriving estimates of delayed impact} is still applicable in the regression setting, requiring only minor changes so that it can be used in such a continuous setting.
In particular, the importance sampling technique we described can be adapted by replacing summations with integrals, probability mass functions with probability density functions, and probabilities with probability densities. By doing so, all results presented in our work (e.g., regarding the unbiasedness of the importance sampling estimator) carry to the continuous case. Notice, finally, that in order to apply \algname in the regression setting, the behavior model, $\beta$, and the new candidate model, $\pi_\theta$, must be \textit{stochastic} regression models---this is similar to the assumption we made when addressing the classification setting (see the discussion in Section~\ref{sec: problem statement}).

\subsection{Enforcing static fairness constraints or constraints on model performance}
\label{app: enforcing static fairness}
In Section~\ref{sec: delayed impact in fair classification}, we showed how users can construct and enforce delayed-impact fairness constraints.
However, users might also be interested in simultaneously enforcing additional behavior---for instance, enforcing \emph{static} fairness constraints or constraints on the primary objective (i.e., on the classification or regression performance).
Assume, for example, that the bank from our running example has constructed a delayed-impact fairness constraint of the form in~\eqref{eqn: delayed impact objective}, and that it is \emph{also} interested in specifying an additional constraint that lower-bounds the model's performance in terms of accuracy. This could be represented by the following objective:
\begin{equation}
    g_\text{PERF}(\theta) = \tau_\text{PERF} - \E[\text{ACC}_\theta],
\end{equation}
%
%
where $\tau_\text{PERF} \in (0,1)$ represents the minimum required accuracy, and ACC$_\theta$ is a random variable representing the accuracy of the model $\theta$, i.e., the fraction of predictions that are correct given a random dataset $D$.

Continuing the example, assume that the bank would also like to enforce a \textit{static} fairness constraint: \textit{false positive error rate balance} (FPRB)~\citep{chouldechova2017fair}, also known as \textit{predictive equality}~\citep{corbett2017algorithmic}. 
Recall that a classifier satisfies an approximate version of FPRB if the absolute difference between the false positive rates of two demographic groups of loan applicants, A and B, is below some user-specified threshold. The bank could specify this static fairness objective as:
\begin{equation}
    g_\text{FPRB} = \left | \E\left[\widehat Y^{\pi_\theta} = 1 \middle | Y= 0, T = A \right] -  \E\left[\widehat Y^{\pi_\theta} = 1 \middle | Y= 0, T = B \right] \right | - \epsilon_\text{FPRB},
\end{equation}
where $\epsilon_\text{FPRB} \in (0,1)$ is the threshold of interest.

Notice that---unlike delayed-impact objectives---$g_\text{PERF}$ and $g_\text{FPRB}$ can be directly computed using only labels $Y$, predictions $\widehat Y^{\pi_\theta}$, and the sensitive attribute $T$; that is, information already available in the dataset or directly obtained from the model. 
In other words, the importance sampling method introduced in Section~\ref{sec: delayed impact in fair classification} would not be needed to obtain high-confidence bounds for these metrics. 
\citet{metevier2019offline} and \citet{thomas2019preventing} present methods to compute high-confidence upper bounds on static fairness constraints and constraints on performance. Notice that $g_\text{PERF}$ and $g_\text{FPRB}$ are just examples of this type of constraints; the techniques introduced by \citet{metevier2019offline} and \citet{thomas2019preventing} are applicable to more general objectives and constraints. We refer the reader to their work for more details.

To conclude our discussion of this example, notice that---once computed---high-confidence upper bounds on $g_\text{PERF}$ and $g_\text{FPRB}$ may be used to determine whether a candidate solution should be returned. Similar to line 11 of Algorithm~\ref{alg: main algorithm multiple constraints}, if all computed upper bounds (with respect to the accuracy objective, the predictive equality objective, and the delayed-impact objectives) are less than or equal to zero, then the candidate solution should be returned. Otherwise, \NSF
~should be returned.

\subsection{Alternative definitions of delayed impact}
Until now, we have assumed that the delayed-impact objective takes the form of~\eqref{eqn: delayed impact objective}.
%
%
Below, we discuss how users of \algname may construct other definitions of delayed impact, and how our formulation of delayed impact (shown in \eqref{eqn: delayed impact objective}) is related to the definitions introduced in the work of~\citet{liu2018delayed}.

\paragraph{Connections to the work of~\citet{liu2018delayed}.}
Our DI objective (1)~has the form $g(\theta) \coloneqq \tau - \mathbf E\left[I^{\pi_\theta} \middle | c(X, Y, T) \right]$.
This is similar to \emph{long-term improvement}, one of the notions of delayed impact introduced by \citet{liu2018delayed}.
Specifically, \citet{liu2018delayed} define long-term improvement as $\Delta \mu_j > 0$, where for group $j$, $\Delta\mu_j$ is the difference between the DI induced by a previously-deployed model and a new model. 
In their work, \citet{liu2018delayed} consider DI to be credit score. 
To enforce this type of long-term improvement in our framework, we can set $\tau$ to be group $j$'s average credit score under the current model (i.e., under the behavior model,  $\beta$) and $\mathbf E\left[I^{\pi_\theta} \middle | T=j \right]$ to be the expected credit score of group $j$ under the new model, $\pi_\theta$. Then, $\Delta \mu_j = \mathbf E\left[I^{\pi_\theta} \middle | T=j \right] - \tau$.
In our framework, a model is fair if $g(\theta) \leq 0$. Setting $g(\theta) = \tau - \mathbf E\left[I^{\pi_\theta} \middle | T=j \right]$ implies that the model is fair (i.e., $g(\theta) \leq 0$) if and only if $\theta$ leads to long-term improvement (i.e., iff $\Delta \mu_j > 0$). 

We can also use \algname to enforce constraints similar to the remaining definitions of delayed impact introduced by~\citet{liu2018delayed}; e.g., long-term decline ($\Delta\mu_j <0$) and long-term stagnation ($\Delta\mu_j=0$). 
Notice that long-term decline has a form similar to long-term improvement: $\Delta\mu_j < 0 \implies g(\theta) =  \mathbf E\left[I^{\pi_\theta} \middle | T=j \right] - \tau$. 
Alternatively, to enforce long-term stagnation users can set $g(\theta) = \big | \tau - E\left[I^{\pi_\theta} \middle | T=j \right]\big |$. Moreover, to enforce \textit{approximate} long-term stagnation, i.e., $| \Delta \mu | < \epsilon$, for some non-negative threshold $\epsilon$, users may set $g(\theta) = \big | \tau - \E\left[I^{\pi_\theta} \middle | T=j \right]\big | - \epsilon$.

Finally, notice that the definitions of long-term decline and stagnation do not have the same form as~\eqref{eqn: delayed impact objective}; nonetheless, these definitions \emph{can} be enforced using the methods introduced in our work---we discuss how to achieve this in the next section.

\paragraph{Enforcing general definitions of delayed impact.} 
%
%
%
To enforce constraints beyond~\eqref{eqn: delayed impact objective}, one can combine the importance sampling technique introduced in Section~\ref{sec: delayed impact in fair classification} with techniques presented in the work of~\citet{metevier2019offline}. 
Assume, for example, that the bank in our running example is interested in enforcing that the resulting expected delayed impact of a classifier's predictions is approximately equal for loan applicants of group $A$ and group $B$. This can be represented by the DI objective $g_\text{DI}(\theta) = \big | \E\left[I^{\pi_\theta} \middle | T=A \right] - \E\left[I^{\pi_\theta} \middle | T=B \right]\big | - \epsilon$. 
To enforce the DI constraint considered in this paper (i.e., the one shown in \eqref{eqn: delayed impact constraint}) on the more general types of DI objectives discussed in this appendix (such as $g_\text{DI}$), one may combine the techniques we introduced in this paper and the bound-propagation methods introduced by~\citet{metevier2019offline}. At a high-level, \algname would, in this case, first compute (as before) unbiased estimates of $\E\left[I^{\pi_\theta} \middle | T=A \right]$ and $\E\left[I^{\pi_\theta} \middle | T=B \right]$ using the importance sampling technique described in Section~\ref{sec: deriving estimates of delayed impact}. Then, it would use the bound-propagation methods introduced by ~\citet{metevier2019offline} to obtain high-confidence upper bounds on $g_\text{DI}(\theta)$. Notice that the discussion above corresponds to just one example of how to deal with alternative delayed-impact objective definitions; in this particular example, $g_\text{DI}$. The same general idea and techniques can, however, also be used to deal with alternative definitions of DI objectives that users of \algname may be interested in.\footnote{This statement holds assuming that the DI objective of interest satisfies the requirements for the bound-propagation technique to be applicable; for example, that the DI objective can be expressed using elementary arithmetic operations (e.g., addition and subtraction) over \textit{base variables} for which we know unbiased estimators~\citep{metevier2019offline}. In the case of the DI objectives discussed in this paper, for instance, we can obtain unbiased estimates of the relevant quantities using importance sampling, as discussed in Section~\ref{sec: deriving estimates of delayed impact}.} All other parts of the algorithm would remain the same---e.g., the algorithm would still split the dataset into two, identify a candidate solution, and check whether it passes the fairness test.

\paragraph{Beyond conditional expectation.}
In Section~\ref{sec: problem statement}, we assume that $g$ is defined in terms of the conditional expected value of the delayed-impact measure. 
However, other forms of fairness metrics might be more appropriate for different applications. 
For example, conditional value at risk \citep{keramati2020being} might be appropriate for risk-sensitive applications, and the median might be relevant for applications with noisy data~\citep{altschuler2019best}.
%
%
\citet{chandak2021universal} introduce off-policy evaluation methods that produce estimates and high-confidence bounds for different distributional parameters of interest, including value at risk, conditional value at risk, variance, median, and interquantile range. 
These techniques can also be combined with ours to obtain high-confidence upper bounds for metrics other than the conditional expected value of $I^{\pi_\theta}$.

\section{Proof of Theorem~\ref{thm: fairness guarantee}: Fairness Guarantee}
\label{app: fairness guarantee}

This section proves Theorem~\ref{thm: fairness guarantee}, which is restated below, along with the relevant assumptions.

\begin{tcolorbox}
\textbf{Assumption~\ref{ass: markov property}:} A model's prediction $\widehat Y^{\pitheta}$ is conditionally independent of $Y$ and $T$ given $X$. That is, for all $x,t, y$, and $\hat y$,
\begin{equation}
    \Pr(\widehat Y^\pitheta {=} \hat y | X{=}x, Y{=}y, T{=}t) = \Pr(\widehat Y^\pitheta {=} \hat y | X{=}x).
\end{equation}

\textbf{Assumption~\ref{ass: switch prediction}:} For all $x$, $y$, $t$, $\hat y$, $i$,\\
\begin{equation}
    \Pr(I^{\beta}{=}i|X{=}x,Y{=}y,T{=}t,\widehat Y^{\beta}{=}\hat y)
    =  \Pr(I^{\pi_{\theta}}=i|X{=}x,Y{=}y,T{=}t,\widehat Y^{\pi_{\theta}}{=}\hat y).
\end{equation}
    
\textbf{Assumption~\ref{ass: support} (Support):} 
For all $x$ and $y$, $\pi_{\theta}(x,y) > 0$ implies that $\beta(x,y) > 0$.

\textbf{Assumption~\ref{ass: restrict feasible set}:}  Every $\theta \in \Theta$ satisfies Assumption~\ref{ass: support}.

\textbf{Assumption~\ref{ass: estimates match bound}:} If \texttt{Bound} is \texttt{Hoeff}, then for all $j \in \{1, ..., k\}$, each estimate in $\hat g_j$ is bounded in some interval $[a_j, b_j]$. If \texttt{Bound} is \texttt{ttest}, then each Avg($\hat g_j$) is normally distributed. 

\end{tcolorbox}

\begin{tcolorbox}
\textbf{Theorem~\ref{thm: fairness guarantee}:} Let $(g_j{})^k_{j=1}$ be a sequence of DI constraints, where $g_j: \Theta \rightarrow \mathbb R$, and let $(\delta_j)^k_{j=1}$ be a corresponding sequence of confidence levels, where each $\delta_j \in (0,1)$. If Assumptions~\ref{ass: markov property}, \ref{ass: switch prediction}, \ref{ass: restrict feasible set}, and \ref{ass: estimates match bound} hold, then for all $j \in \{1, ..., k\}$,
\begin{equation}
    \Pr(g_j(a(D)) \leq 0) \geq 1-\delta_j. 
\end{equation}
\end{tcolorbox}

We first provide three lemmas that will be used when proving Theorem~\ref{thm: fairness guarantee}. 
\begin{lemma}
\label{lem: g estimates are identically distributed}
Let $\hat g_j$ be the estimates of $g$ constructed in Algorithm~\ref{alg: main algorithm multiple constraints}, and let $D_{f_c}$ be a subdataset of $D_f$ such that a data point $(X, Y, T, \widehat Y^\beta, I^\beta)$ is only in $D_{f_c}$ if $c(X, Y, T)$ is true. 
Then, for all $\theta\in\Theta$, the elements in $\hat g_j$ are $i.i.d.$ samples from the conditional distribution of $\hat g_j$ given $c(X,Y,T)$.
\end{lemma}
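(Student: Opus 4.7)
The plan is to reduce the claim to the elementary fact that filtering a sequence of i.i.d.\ random variables by a predicate evaluated on each variable individually produces (conditional on the selected indices) i.i.d.\ samples from the conditional distribution. I fix an arbitrary $\theta\in\Theta$ throughout, and I treat the dependence on $D_c$ as irrelevant here because $\theta$ is fixed and the estimates in $\hat g_j$ are formed only from points of $D_f$.

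First, I would recall that by construction $D = \{(X_i,Y_i,T_i,\widehat Y^\beta_i,I^\beta_i)\}_{i=1}^n$ is an i.i.d.\ sequence, and the partition step on line~\ref{line: partition dataset} of Algorithm~\ref{alg: main algorithm} assigns each data point to either $D_c$ or $D_f$ in a way that does not depend on the data itself (e.g., by index). Hence $D_f$ is also an i.i.d.\ sequence with the same common distribution as any single data point $(X,Y,T,\widehat Y^\beta,I^\beta)$. Next, observe that the $j$\textsuperscript{th} element appended to $\hat g_j$ in the loop of Algorithm~\ref{alg: main algorithm} is the deterministic transformation
\begin{equation}
\phi_\theta(X_j,Y_j,T_j,\widehat Y^\beta_j,I^\beta_j) \coloneqq \tau - \frac{\pi_\theta(X_j,\widehat Y^\beta_j)}{\beta(X_j,\widehat Y^\beta_j)}\,I^\beta_j,
\end{equation}
applied only on indices satisfying the predicate $c(X_j,Y_j,T_j)$. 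Because $\phi_\theta$ and $c$ are fixed measurable maps, the pairs $\bigl(c(X_j,Y_j,T_j),\,\phi_\theta(X_j,Y_j,T_j,\widehat Y^\beta_j,I^\beta_j)\bigr)$ inherit the i.i.d.\ property of $D_f$.

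From here I would invoke the standard measure-theoretic fact that if $(W_j,C_j)_{j=1}^{n_{D_f}}$ are i.i.d.\ and $J = \{j : C_j=1\}$, then conditional on $J$, the sub-collection $(W_j)_{j\in J}$ consists of $|J|$ independent samples from the conditional distribution of $W_1$ given $C_1=1$. Applied with $W_j = \phi_\theta(X_j,Y_j,T_j,\widehat Y^\beta_j,I^\beta_j)$ and $C_j = \llbracket c(X_j,Y_j,T_j)\rrbracket$, this immediately gives that the elements actually appended to $\hat g_j$ are i.i.d.\ samples from the conditional distribution of $\phi_\theta(X,Y,T,\widehat Y^\beta,I^\beta)$ given $c(X,Y,T)$, which is exactly the conditional distribution of $\hat g_j$ given $c(X,Y,T)$ that the lemma references.

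The main obstacle is being careful about what is being conditioned on. In particular, I must treat the random size of $D_{f_c}$ correctly: one cannot simply say that the first $|D_{f_c}|$ estimates are i.i.d.\ without first conditioning on the pattern of indices selected by $c$. The cleanest way is to condition on the full random subset $J$ of retained indices and then appeal to the factorization of the joint density of i.i.d.\ samples under the product predicate. I would also note that the claim is stated for a fixed $\theta$ (``for all $\theta\in\Theta$''), so there is no subtlety involving the data-dependence of $\theta_c$; within the scope of this lemma $\theta$ plays the role of a deterministic parameter, and the randomness is entirely in $D_f$.
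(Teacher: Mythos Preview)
Your proposal is correct and follows essentially the same approach as the paper: both argue that each element of $\hat g_j$ is a fixed measurable function of a single data point from $D_f$, that the predicate $c$ filters these i.i.d.\ points, and hence the retained values are i.i.d.\ from the conditional distribution given $c(X,Y,T)$. Your treatment is in fact more careful than the paper's three-sentence proof---you make explicit the deterministic map $\phi_\theta$, state the filtering-by-predicate fact precisely, and flag the conditioning on the random index set $J$, whereas the paper simply asserts that the points in $D_{f_c}$ are ``conditionally independent given $c(X,Y,T)$'' and moves on.
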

\begin{proof}
    To obtain $\hat g_j$, each data point in $D_{f_c}$ is transformed into an estimate of $g(\theta)$ using the importance sampling estimate $\tau - \frac{\pi_{\theta}(X, \widehat Y^\beta)}{\beta(X, \widehat Y^\beta)}I^\beta$ (Algorithm \ref{alg: main algorithm multiple constraints}, lines 5--8). 
Since each element of $\hat g_j$ is computed from a single data point in $D_{f_c}$, and the points in $D_{f_c}$ are conditionally independent given $c(X,Y,T)$, it follows that each element of $\hat g_j$ is conditionally independent given $c(X,Y,T)$. 
So, each element of $\hat g_j$ can be viewed as an i.i.d.~sample from the conditional distribution of $\hat g_j$ given $c(X,Y,T)$. 
\end{proof}

\begin{lemma}
\label{lem: g estimates are unbiased}
Let $\hat g_j$ be the estimates of $g$ constructed in Algorithm~\ref{alg: main algorithm multiple constraints}. If Assumptions~\ref{ass: markov property}, \ref{ass: switch prediction}, and~\ref{ass: restrict feasible set} hold, then for all $\theta\in\Theta$, each element in $\hat g_j$ is an unbiased estimate of $g_j(\theta)$.
\end{lemma}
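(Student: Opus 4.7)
The plan is to reduce this lemma directly to Theorem~\ref{thm: is known behavior model}. By construction (Algorithm~\ref{alg: main algorithm multiple constraints}, lines 5--8), every element of $\hat g_j$ is of the form $\tau_j - \frac{\pi_\theta(X,\widehat Y^\beta)}{\beta(X,\widehat Y^\beta)} I^\beta$ computed from a data point $(X,Y,T,\widehat Y^\beta,I^\beta)$ for which $c(X,Y,T)$ is true. Recognizing the fraction times $I^\beta$ as the importance sampling estimator $\hat I^{\pi_\theta}$ defined in Section~\ref{sec: deriving estimates of delayed impact}, each element of $\hat g_j$ is distributed as $\tau_j - \hat I^{\pi_\theta}$ conditioned on $c(X,Y,T)$.

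First I would take the conditional expectation and use linearity to separate the deterministic tolerance $\tau_j$ from the importance-sampled term, yielding $\mathbf{E}[\tau_j - \hat I^{\pi_\theta}\mid c(X,Y,T)] = \tau_j - \mathbf{E}[\hat I^{\pi_\theta}\mid c(X,Y,T)]$. Next, I would invoke Theorem~\ref{thm: is known behavior model} to replace $\mathbf{E}[\hat I^{\pi_\theta}\mid c(X,Y,T)]$ with $\mathbf{E}[I^{\pi_\theta}\mid c(X,Y,T)]$. This step is precisely where Assumptions~\ref{ass: markov property} and~\ref{ass: switch prediction} enter. Assumption~\ref{ass: support}, required by Theorem~\ref{thm: is known behavior model}, is supplied by Assumption~\ref{ass: restrict feasible set}, which ensures that every $\theta \in \Theta$ — in particular the $\theta$ under consideration — satisfies the support condition needed for the importance weights to be well-defined and unbiased.

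Finally, I would observe that $\tau_j - \mathbf{E}[I^{\pi_\theta}\mid c(X,Y,T)]$ is exactly the definition of $g_j(\theta)$ given in~\eqref{eqn: delayed impact objective}, so each element of $\hat g_j$ is an unbiased estimate of $g_j(\theta)$. The proof is essentially a one-line bookkeeping argument on top of Theorem~\ref{thm: is known behavior model}; the only subtlety worth being careful about is noting that the conditioning event $c(X,Y,T)$ in the lemma matches the conditioning event in the statement of Theorem~\ref{thm: is known behavior model}, so no additional manipulations (such as a law-of-total-expectation unwrapping) are required. I do not anticipate a genuine obstacle beyond stating these correspondences clearly.
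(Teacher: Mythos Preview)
Your proposal is correct and follows essentially the same approach as the paper: take the conditional expectation of a generic element of $\hat g_j$, use linearity to pull out $\tau_j$, recognize the remaining term as $\hat I^{\pi_\theta}$, apply Theorem~\ref{thm: is known behavior model} (whose hypotheses are supplied by Assumptions~\ref{ass: markov property}, \ref{ass: switch prediction}, and~\ref{ass: restrict feasible set}), and identify the result with the definition of $g_j(\theta)$. Your observation that Assumption~\ref{ass: restrict feasible set} is what guarantees Assumption~\ref{ass: support} for every $\theta\in\Theta$ is exactly the justification the paper gives as well.
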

\begin{proof}
     We begin by considering the expected value of any element in $\hat g_j$:
\begin{align}
    \E\left[\tau - \frac{\pi_{\theta}(X, \widehat Y^\beta)}{\beta(X, \widehat Y^\beta)}I^\beta \middle | c(X,Y,T)\right ]
    =& \tau - \E\left[\frac{\pi_{\theta}(X, \widehat Y^\beta)}{\beta(X, \widehat Y^\beta)}I^\beta \middle | c(X,Y,T) \right]\\
    =& \tau - \E\left[\hat I^{\pi_{\theta}} \middle | c(X,Y,T) \right]\label{eqn: sub defn of importance weight}\\
    =& \tau - \E\left[I^{\pi_{\theta}} \middle | c(X,Y,T) \right] \label{eqn: sub use of unbiased theorem}\\
    =& g_j(\theta).
    \end{align}
Expression~\eqref{eqn: sub use of unbiased theorem} follows from  Theorem~\ref{thm: is known behavior model}, which relies on Assumptions~\ref{ass: markov property}, \ref{ass: switch prediction}, and \ref{ass: restrict feasible set}. Therefore, for all $\theta\in\Theta$, the elements of $\hat g_j$ are unbiased estimates of $g_j(\theta)$.
\end{proof}

Let $\theta_c$ be the model returned by candidate selection in Algorithm~\ref{alg: main algorithm multiple constraints} (line 2), and let $U_j$ be the value of $U$ at iteration $j$ of the for loop (lines 4--10). 
\begin{lemma}
\label{lem: delta-bound on error}
If Assumptions~\ref{ass: markov property}, \ref{ass: switch prediction}, \ref{ass: restrict feasible set}, and \ref{ass: estimates match bound} hold, then the upper bounds $U_j$ calculated in Algorithm~\ref{alg: main algorithm multiple constraints} satisfy $\forall j \in \{1, ..., k\}$, $\Pr(g_j(\theta_c) > U_j) \leq \delta_j$.
\end{lemma}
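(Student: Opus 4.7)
The plan is to reduce the lemma to a direct application of Property~\ref{prop: student's ttest} or Property~\ref{prop: hoeffding} (depending on the setting of \texttt{Bound}) by using Lemmas~\ref{lem: g estimates are identically distributed} and~\ref{lem: g estimates are unbiased}, together with the independence of the candidate-selection and fairness-test datasets produced on line~\ref{line: partition dataset} of Algorithm~\ref{alg: main algorithm}. First, I would fix an arbitrary $j \in \{1,\dots,k\}$ and condition on $\theta_c$. Because $\theta_c$ is a function only of $D_c$, while the vector $\hat g_j$ is computed entirely from $D_f$, and since the partition produces independent halves of the i.i.d.\ dataset $D$, the conditional distribution of $\hat g_j$ given $\theta_c$ is well-defined and all randomness in $\hat g_j$ comes from $D_f$ (conditional on $\theta_c$). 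Under Assumption~\ref{ass: restrict feasible set}, $\theta_c \in \Theta$ satisfies Assumption~\ref{ass: support}, so Lemmas~\ref{lem: g estimates are identically distributed} and~\ref{lem: g estimates are unbiased} apply to $\theta_c$.

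Next I would invoke those two lemmas conditionally: Lemma~\ref{lem: g estimates are identically distributed} gives that the entries of $\hat g_j$ are i.i.d.\ samples from the conditional distribution of the importance-sampled estimator given $c(X,Y,T)$, and Lemma~\ref{lem: g estimates are unbiased} gives that the common mean of this distribution is exactly $g_j(\theta_c)$ (this is where Assumptions~\ref{ass: markov property} and~\ref{ass: switch prediction} enter through Theorem~\ref{thm: is known behavior model}). Thus $\hat g_j$ is a vector of i.i.d.\ unbiased samples of $g_j(\theta_c)$, which is precisely the data situation required by the concentration inequalities in Section~3.2.

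Then I would split into the two cases corresponding to the \texttt{Bound} argument. If \texttt{Bound} is \texttt{ttest}, Assumption~\ref{ass: estimates match bound} guarantees that $\mathrm{Avg}(\hat g_j)$ is normally distributed, so Property~\ref{prop: student's ttest} yields
\begin{equation}
\Pr\bigl(g_j(\theta_c) > U_{\texttt{ttest}}(\hat g_j) \,\big|\, \theta_c\bigr) \leq \delta_j.
\end{equation}
If \texttt{Bound} is \texttt{Hoeff}, Assumption~\ref{ass: estimates match bound} guarantees the boundedness of each estimate in $[a_j,b_j]$, so Property~\ref{prop: hoeffding} yields the same conditional inequality with $U_{\texttt{Hoeff}}$ in place of $U_{\texttt{ttest}}$. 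In either case, since $U_j$ is defined by Algorithm~\ref{alg: main algorithm} to be whichever of these bounds is used, we obtain $\Pr(g_j(\theta_c) > U_j \mid \theta_c) \leq \delta_j$. Finally, I would take expectation over $\theta_c$ (equivalently, marginalize using the tower property) to remove the conditioning and conclude $\Pr(g_j(\theta_c) > U_j) \leq \delta_j$, as required.

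The main obstacle I anticipate is bookkeeping rather than any deep idea: one has to be careful that the i.i.d.\ claim of Lemma~\ref{lem: g estimates are identically distributed} is used \emph{conditionally on $\theta_c$} so that $\theta_c$ may be treated as a deterministic parameter when invoking Properties~\ref{prop: student's ttest} and~\ref{prop: hoeffding}, and that the independence of $D_c$ and $D_f$ coming from the partition on line~\ref{line: partition dataset} is what legitimizes this conditioning. A second small subtlety is that if the conditioning event $c(X,Y,T)$ is never satisfied in $D_f$, $\hat g_j$ is empty and $U_j$ is not well-defined; this is handled by the usual convention (implicit in Algorithm~\ref{alg: main algorithm}) that in such a degenerate case the bound is taken to be $+\infty$ (or equivalently \NSF\ is returned), so the inequality $\Pr(g_j(\theta_c) > U_j) \leq \delta_j$ holds trivially.
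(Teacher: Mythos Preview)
Your proposal is correct and follows essentially the same approach as the paper: invoke Lemmas~\ref{lem: g estimates are identically distributed} and~\ref{lem: g estimates are unbiased} to obtain (conditionally) i.i.d.\ unbiased estimates of $g_j(\theta_c)$, then apply Property~\ref{prop: student's ttest} or Property~\ref{prop: hoeffding} as dictated by Assumption~\ref{ass: estimates match bound}. If anything, your explicit conditioning on $\theta_c$ (justified via the $D_c$/$D_f$ independence) followed by marginalization is a more careful treatment of a point the paper leaves largely implicit.
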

\begin{proof}
We begin by noting that by Lemma~\ref{lem: g estimates are identically distributed}, the data points used to construct each $(1-\delta_j)$-probability bound, i.e., the data points in each $\hat g_j$, are (conditionally) i.i.d.
Because $\theta_c \in \Theta$, by Lemma~\ref{lem: g estimates are unbiased} (which uses Assumptions~\ref{ass: markov property}, \ref{ass: switch prediction}, and~\ref{ass: restrict feasible set}), we know that each element in $\hat g_j$ is an unbiased estimate of $g_j(\theta_c)$. 
Therefore, Hoeffding's inequality or Student's $t$-test can be applied to random variables that are (conditionally) i.i.d.\footnote{Samples that are conditionally i.i.d.~given some event $E$ can be viewed as i.i.d.~samples from the conditional distribution. Applying the confidence intervals to these samples therefore provides high-confidence bounds on the \emph{conditional} expected value given the event $E$, which is precisely what we aim to bound.} and unbiased estimates of $g_j(\theta_c)$. Moreover, under Assumption~\ref{ass: estimates match bound}, when \texttt{Bound} is \texttt{Hoeff}, the requirements of Hoeffding's inequality are satisfied (Property~\ref{prop: hoeffding}), and when \texttt{Bound} is \texttt{ttest}, the requirements of Student's $t$-test are satisfied (Property~\ref{prop: student's ttest}). 
Therefore, the upper bounds calculated in Algorithm~\ref{alg: main algorithm multiple constraints} satisfy $\Pr(g_j(\theta_c) > U_j) \leq \delta_j$.
\end{proof}

\paragraph{Proof of Theorem~\ref{thm: fairness guarantee}}

\begin{proof} To show Theorem~\ref{thm: fairness guarantee}, we prove the contrapositive, i.e., $ \forall j \in \{1, ..., k\}, \Pr(g_j(a(D)) > 0) \leq \delta_j$.  

Consider the event $\forall j \in \{1, ..., k\}, g_j(a(D)) > 0$. When this event occurs, it is always the case that $a(D) \neq \NSF$ (by definition, $g(\NSF) = 0$).
That is, a nontrivial solution was returned by the algorithm, and for all $j$, $U_j \leq 0$  (line $11$ of Algorithm~\ref{alg: main algorithm multiple constraints}). 
Therefore, \eqref{eqn: positive aD implies U <= 0} (shown below) holds.  
\begin{align}
    \Pr(g_j(a(D)>0) 
    =& \Pr(g_j(a(D))>0, U_j \leq 0)\label{eqn: positive aD implies U <= 0}\\
    \leq& \Pr(g_j(a(D)) > U_j)\label{eqn: joint implies g > U}\\
    =& \Pr(g_j(\theta_c) > U_j) \label{eqn: substitute nontrivial solution}\\
    \leq& \delta_j.\label{eqn: lemma}
\end{align}
Expression~\eqref{eqn: joint implies g > U} is a result of the fact that the joint event in~\eqref{eqn: positive aD implies U <= 0} implies the event $(g_j(a(D)) > U_j)$. 
We substitute $\theta_c$ for $a(D)$ in~\eqref{eqn: substitute nontrivial solution} because the event $\forall j \in \{1, ..., k\}, g_j(a(D)) > 0$ implies that a nontrivial solution, or a solution that is not \NSF, was returned: $a(D) = \theta_c$. 
Lastly,~\eqref{eqn: lemma} follows from Lemma~\ref{lem: delta-bound on error}. 
%
%
%
This implies that $\Pr(g_j(a(D)>0) \leq \delta_j \ \forall j \in \{1, ..., k\}$, completing the proof. 

\end{proof}

\section{Proof of Theorem~\ref{thm: consistency}}
\label{app: consistency proof}

This section proves Theorem~\ref{thm: consistency}, restated below. 
\citet{metevier2019offline} provide a similar proof for a Seldonian contextual bandit algorithm, which we adapt to our Seldonian classification algorithm. 
Extending their proof to our setting involves the following minor changes:
\begin{enumerate}
    \item Changes related to the output of the function used to calculate the utility of a solution: \citet{metevier2019offline} consider a utility function that returns the sample reward of a policy. Instead, our utility function (Algorithm~\ref{alg: cost function}) outputs the sample loss of a model.
    
    \item Changes due to the form of the fairness constraints: The form of our delayed-impact constraint differs from the more general form of the fairness constraints considered by~\citet{metevier2019offline}. This results in a simplified argument that our algorithm is consistent. 
\end{enumerate}

Rather than reword their proof with these minor changes, below we provide their proof with these minor changes incorporated. 

\begin{tcolorbox}
\textbf{Theorem~\ref{thm: consistency}:} If Assumptions~\ref{ass: markov property}--\ref{ass: consistent loss estimator} hold, then $\lim_{n\rightarrow \infty} \Pr(a(D) \neq \NSF, \ g(a(D)) \leq 0\big) = 1$.
\end{tcolorbox}
We begin by providing definitions and assumptions necessary for presenting our main result. To simplify notation, we assume that there exists only a single delayed-impact constraint and note that the extension of this proof to multiple delayed-impact constraints is straightforward.

Recall that the logged data, $D$, is a random variable. To further formalize this notion, let $(\Omega, \Sigma, p)$ be a probability space on which all relevant random variables are defined, and let $D_n: \Omega \rightarrow \mathcal D$ be a random variable, where $\mathcal D$ is the set of all possible datasets and $D_n = D_c \cup D_f$. 
We will discuss convergence as $n \rightarrow \infty$. $D_n(\omega)$ is a particular sample of the entire set of logged data with $n$ data points, where $\omega \in \Omega$.

\begin{definition}[Piecewise Lipschitz continuous]
\label{defn: Piecewise Lipshitz continuity}
We say that a function $f:M\rightarrow \mathbb{R}$ on a metric space $(M,d)$ is piecewise Lipschitz continuous with Lipschitz constant $K$ and with respect to  a countable partition, $\{M_1, M_2, ...\}$, of $M$ if $f$ is Lipschitz continuous with Lipschitz constant $K$ on all metric spaces in $\{(M_i,d)\}^{\infty}_{i=1}$.
\end{definition}
\begin{definition}[$\delta$-covering]
\label{defn: delta covering}
If $(M,d)$ is a metric space, a set $X \subseteq M$ is a $\delta$-covering of $(M,d)$ if and only if $\max\limits_{y\in M} \min\limits_{x \in X} d(x,y) \leq \delta$.
\end{definition}
Let $\hat c(\theta, D_c)$ denote the output of a call to Algorithm~{\ref{alg: cost function}}, and let $c(\theta) \coloneqq \ell_{\max} + g(\theta)$.
The next assumption ensures that $c$ and $\hat c$ are piecewise Lipschitz continuous. Notice that the $\delta$-covering requirement is straightforwardly satisfied if $\Theta$ is countable or $\Theta \subseteq \mathbb R^m$ for any positive natural number $m$.
\begin{assumption}
\label{ass: piecewise Lipschitz}
The feasible set of policies, $\Theta$, is equipped with a metric, $d_{\Theta}$, such that for all $D_c(\omega)$ there exist countable partitions of $\Theta$, $\Theta^c = \{\Theta^c_1, \Theta^c_2, ... \}$, and $\Theta^{\hat{c}} = \{\Theta^{\hat{c}}_1, \Theta^{\hat{c}}_2, ...\}$, where $c(\cdot)$ and $\hat{c}(\cdot,D_c(\omega))$ are piecewise Lipschitz continuous with respect to $\Theta^c$ and $\Theta^{\hat{c}}$ respectively with Lipschitz constants $K$ and $\hat{K}$. Furthermore, for all $i \in \mathbb N_{>0}$ and all $\delta > 0$ there exist countable $\delta$-covers of $\Theta^c_i$ and $\Theta^{\hat{c}}_i$.
\end{assumption}
Next, we assume that a fair solution, $\theta^\star$, exists such that $g(\theta^\star)$ is not precisely on the boundary of fair and unfair. 
This can be satisfied by solutions that are arbitrarily close to the fair-unfair boundary.
\begin{assumption}
\label{ass: existence of fair policy}
    There exists an $\epsilon > \xi$ and a $\theta^{\star} \in \Theta$ such that $g(\theta^{\star}) \leq -\epsilon$.
\end{assumption}
Next, we assume that the sample loss, $\hat \ell$, converges almost surely to $\ell$, the actual expected loss.
\begin{assumption}
\label{ass: consistent loss estimator}
$\forall \theta \in \Theta$, $\hat\ell(\theta,D_c) \overset{\text{a.s.}}{\longrightarrow} \ell(\theta)$. 
\end{assumption}

We prove Theorem~\ref{thm: consistency} by building up properties that culminate with the desired result, starting with a variant of the strong law of large numbers: 
\begin{property}[Khintchine Strong Law of Large Numbers]
\label{prop: law of large numbers}
Let $\{X_{\iota}\}_{i=1}^\infty$ be independent and identically distributed random variables. Then $(\frac{1}{n}\sum_{i=1}^n X_{\iota})_{n=1}^\infty$ is a sequence of random variables that converges almost surely to $\mathbf{E}[X_1]$, if $\mathbf{E}[X_1]$ exists, i.e., $\frac{1}{n}\sum_{i=1}^n X_{\iota} \overset{\text{a.s.}}{\longrightarrow} \mathbf{E}[X_1]$.
\end{property}
\begin{proof}
See Theorem 2.3.13 of \citet{Sen1993}.
\end{proof}

Next, we show that the average of the estimates of $g(\theta)$ converge almost surely to $g(\theta)$:
\begin{property}
    \label{prop: g estimates converge to g}
    If Assumptions~\ref{ass: markov property}, \ref{ass: switch prediction}, and~\ref{ass: restrict feasible set} hold, then $\forall \theta\in\Theta, \operatorname{Avg}(\hat g) \xrightarrow{\text{a.s.}}g(\theta)$. 
\end{property}
\begin{proof}
    %
    Recall that given Assumptions~\ref{ass: markov property}, \ref{ass: switch prediction}, and~\ref{ass: restrict feasible set}, Lemmas~\ref{lem: g estimates are identically distributed} and~\ref{lem: g estimates are unbiased} hold, i.e., estimates in $\hat g$ are i.i.d., and each estimate in $\hat g$ is an unbiased estimate of $g(\theta)$. 
    Also, recall that if $n_{\hat g}$ is the number of elements in $\hat g$, $\operatorname{Avg}(\hat g) \coloneqq \frac{1}{n_{\hat g}}\sum_{i=1}^{n_{\hat g}}\hat g_i$. 
   Then, by Property~\ref{prop: law of large numbers} we have that $\operatorname{Avg}(\hat g) \xrightarrow{\text{a.s.}}g(\theta)$.
\end{proof}

In this proof, we consider the set $\bar{\Theta} \subseteq \Theta$, which contains all solutions that are not fair, and some that are fair but fall beneath a certain threshold: $\bar{\Theta} \coloneqq \{\theta \in \Theta: g(\theta) > -\xi/2\}$.
At a high level, we will show that the probability that the candidate solution, $\theta_c$, viewed as a random variable that depends on the candidate data set $D_c$, satisfies $\theta_c \not \in \bar{\Theta}$ converges to one as $n \rightarrow \infty$, and then that the probability that $\theta_c$ is returned also converges to one as $n \rightarrow \infty$. 

First, we will show that the upper bounds $U^+$ (constructed in candidate selection, i.e., Algorithm~\ref{alg: cost function}) and $U$ (constructed in the fairness test, i.e., Algorithm~\ref{alg: main algorithm}) converge to $g(\theta)$ for all $\theta \in \Theta$. 
To clarify notation, we write $U^+(\theta, D_c)$ and $U(\theta, D_f)$ to emphasize that each depends on $\theta$ and the datasets $D_c$ and $D_f$, respectively.
\begin{property}
\label{prop: bounds converge to g}
If Assumptions~\ref{ass: markov property}, \ref{ass: switch prediction}, \ref{ass: restrict feasible set}, and~\ref{ass: estimates match bound} hold, then for all $\theta \in \Theta$, $U^+(\theta, D_c) \xrightarrow{\text{a.s.}}g(\theta)$ and $U(\theta, D_f) \xrightarrow{\text{a.s.}}g(\theta)$. 
\end{property}
\begin{proof}
 Given Assumption~\ref{ass: estimates match bound}, Hoeffding's inequality and Student's $t$-test construct high-confidence upper bounds on the mean by starting with the sample mean of the unbiased estimates (in our case, $\operatorname{Avg}(\hat g))$ and then adding an additional term (a constant in the case of Hoeffding's inequality). 
    Thus, $U(\theta, D_f)$ can be written as $\operatorname{Avg}(\hat g)+Z_n$, where $Z_n$ is a sequence of random variables that converges (surely for Hoeffding's inequality, almost surely for Student's $t$-test) to zero. 
    So, $Z_n\overset{\text{a.s.}}{\longrightarrow}0$, and we need only show that $\operatorname{Avg}(\hat g)\overset{\text{a.s.}}{\longrightarrow}g(\theta)$, which follows from Assumptions~\ref{ass: markov property}, \ref{ass: switch prediction}, and Property~\ref{ass: restrict feasible set}. 
    We therefore have that  $U\overset{\text{a.s.}}{\longrightarrow} g(\theta)$.
    
    The same argument can be used when substituting $U^+(\theta,D_c)$ for $U(\theta,D_f)$. Notice that the only difference between the method used to construct confidence intervals in the fairness test (that is, $U^+$) and in Algorithm~\ref{alg: cost function} (that is, $U$) is the multiplication of $Z_n$ by a constant $\lambda$. 
    This still results in a sequence of random variables that converges (almost surely for Student's $t$-test) to zero.
\end{proof}

Recall that we define $\hat c(\theta, D_c)$ to be the output of Algorithm~\ref{alg: cost function}. Below, we show that given a fair solution $\theta^\star$ and data $D_c$, $\hat c(\theta^\star, D_c)$ converges almost surely to $\ell(\theta^\star)$, the expected loss of $\theta^\star$. 
\begin{property}
\label{prop: convergence of fair candidateValue}
If Assumptions~\ref{ass: markov property}, \ref{ass: switch prediction}, \ref{ass: restrict feasible set}, \ref{ass: estimates match bound}, \ref{ass: existence of fair policy}, and~\ref{ass: consistent loss estimator} hold, 
$\hat{c}(\theta^{\star}, D_c) \overset{\text{a.s.}}{\longrightarrow}$ $\ell(\theta^\star)$.
\end{property}
\begin{proof}
By Property~\ref{prop: bounds converge to g} (which holds given Assumptions~\ref{ass: markov property}, \ref{ass: switch prediction}, \ref{ass: restrict feasible set}, and~\ref{ass: estimates match bound}), we have that  $U^+$$(\theta^{\star}) \overset{\text{a.s.}}{\longrightarrow} g(\theta^{\star})$. By Assumption~\ref{ass: existence of fair policy}, we have that $g(\theta^{\star}) \leq -\epsilon$.
Now, let 
    \begin{equation}
    A=\{\omega \in \Omega : \lim_{n\to\infty}U^+(\theta^\star, D_c(\omega))=g(\theta^\star)\}.
    \end{equation}
     Recall that $U^+(\theta^\star, D_c)\overset{\text{a.s.}}{\longrightarrow}g(\theta^\star)$ means that $\Pr(\lim_{n\to\infty} U^+$$(\theta^\star, D_c)=g(\theta^\star))=1$. So, $\omega$ is in $A$ almost surely, i.e., $\Pr(\omega \in A)=1$. 
     Consider any $\omega \in A$. 
     From the definition of a limit and the previously established property that $g(\theta^\star) \leq -\epsilon$, we have that there exists an $n_0$ such that for all $n \geq n_0$, Algorithm~\ref{alg: cost function}  will return $\hat \ell(\theta^\star, D_c)$ (this avoids the discontinuity of the \texttt{if} statement in Algorithm~\ref{alg: cost function}
     for values smaller than $n_0$).  

     Furthermore, we have from Assumption~\ref{ass: consistent loss estimator} that $\hat \ell(\theta^\star,D_c)\overset{\text{a.s.}}{\longrightarrow} \ell(\theta^\star)$.
     Let 
    \begin{equation}
        B=\{\omega \in \Omega : \lim_{n\to\infty}\hat \ell(\theta^\star,D_c(\omega)) = \ell(\theta^\star) \}.
    \end{equation}
     From Assumption~\ref{ass: consistent loss estimator}, we have that $\omega$ is in $B$ almost surely, i.e., $\Pr(\omega \in B)=1$, and thus by the countable additivity of probability measures, $\Pr(\omega \in (A \cap B))=1$. 
     
     Consider now any $\omega \in (A \cap B)$. We have that for sufficiently large $n$, Algorithm~\ref{alg: cost function}
     will return $\hat \ell(\theta^\star,D_c)$ (since $\omega \in A$), and further that $\hat \ell
     (\theta^\star,D_c) \to \ell(\theta^\star)$ (since $\omega \in B$). 
     Thus, for all $\omega \in (A \cap B)$, the output of Algorithm~\ref{alg: cost function} converges to $\ell(\theta^\star)$, i.e., $\hat c(\theta^\star, D_c(\omega)) \to\ell(\theta^\star)$. Since $\Pr(\omega \in (A \cap B))=1$, we conclude that $\hat c(\theta^\star, D_c(\omega))  \overset{\text{a.s.}}{\longrightarrow}$$\ell(\theta^\star)$.
\end{proof} 

We have now established that the output of Algorithm~\ref{alg: cost function} converges almost surely to $\ell(\theta^\star)$ for the $\theta^\star$ assumed to exist in Assumption~\ref{ass: existence of fair policy}. 
We now establish a similar result for all $\theta \in \bar \Theta$---that the output of Algorithm~\ref{alg: cost function} converges almost surely to $c(\theta)$ (recall that $c(\theta)$ is defined as $\ell_{\max} + g(\theta)$).
\begin{property}
\label{prop: convergence of unfair candidateValue}
If Assumptions~\ref{ass: markov property}, \ref{ass: switch prediction}, \ref{ass: restrict feasible set}, and~\ref{ass: estimates match bound} hold, then for all $\theta \in \bar \Theta, \ \hat{c}(\theta, D_c) \overset{\text{a.s.}}{\longrightarrow} c(\theta)$.
\end{property}
\begin{proof}
By Property~\ref{prop: bounds converge to g} (which holds given Assumptions~\ref{ass: markov property}, \ref{ass: switch prediction}, \ref{ass: restrict feasible set}, and~\ref{ass: estimates match bound}), we have that $U^+(\theta, D_c)$$\overset{\text{a.s.}}{\longrightarrow}g(\theta)$. 
    If $\theta \in \bar \Theta$, then we have that $g(\theta) > -\xi/2$. 
    We now change the definition of the set $A$ from its definition in the previous property to a similar definition suited to this property. That is, let:
    \begin{equation}
    A=\{\omega \in \Omega : \lim_{n\to\infty}U^+(\theta, D_c(\omega))=g(\theta)\}.
    \end{equation}
    Recall that $U^+(\theta, D_c)$$\overset{\text{a.s.}}{\longrightarrow}g(\theta)$ means that $\Pr(\lim_{n\to\infty} U^+(\theta, D_c)=g(\theta))=1$. So, $\omega$ is in $A$ almost surely, i.e., $\Pr(\omega \in A)=1$. 
    Consider any $\omega \in A$. 
    From the definition of a limit and the previously established property that $g(\theta) > -\xi/2$, we have that there exists an $n_0$ 
    such that for all $n \geq n_0$ Algorithm~\ref{alg: cost function} will return $\ell_{\max} + U^+(\theta, D_c(\omega))$. 
    By Property~\ref{prop: bounds converge to g} (which holds given Assumptions~\ref{ass: markov property}, \ref{ass: switch prediction}, \ref{ass: restrict feasible set}, and~\ref{ass: estimates match bound}), $U^+(\theta,D_c(\omega))\overset{\text{a.s.}}{\longrightarrow} g(\theta)$. 
    So, for all $\omega \in A$, the output of Algorithm~\ref{alg: cost function} converges almost surely to $\ell_\text{max}+g(\theta)$; that is, $\hat c(\theta,D_c(\omega)) \overset{\text{a.s.}}{\longrightarrow}\ell_\text{max}+g(\theta)$, and since $c(\theta)=\ell_\text{max}+g(\theta)$, we therefore conclude that  $\hat c(\theta, D_c(\omega))  \overset{\text{a.s.}}{\longrightarrow} c(\theta)$. 
\end{proof}

By Property~\ref{prop: convergence of unfair candidateValue} and one of the common definitions of almost sure convergence, 
$$\forall \theta \in \bar \Theta, \forall \epsilon > 0, \Pr\Big( \lim\limits_{n \rightarrow \infty} \text{inf} \{\omega \in \Omega: | \hat c(\theta, D_n(\omega)) - c(\theta)| < \epsilon\} \Big) = 1.$$
Because $\Theta$ is not countable, it is not immediately clear that all $\theta \in \bar \Theta$ converge simultaneously to their respective $c(\theta)$. 
We show next that this is the case due to our smoothness assumptions.
\begin{property}
    \label{prop: simultaneous convergence}
    If Assumptions~\ref{ass: markov property}, \ref{ass: switch prediction}, \ref{ass: restrict feasible set}, \ref{ass: estimates match bound}, and~\ref{ass: piecewise Lipschitz} hold, then $\forall \epsilon' > 0$, 
    \begin{equation}
        \Pr\Big( \lim\limits_{n \rightarrow \infty} \emph{inf} \{\omega \in \Omega: \forall \theta \in \bar \Theta, |\hat c(\theta, D_c(\omega)) - c(\theta)| < \epsilon'\} \Big) = 1.
    \end{equation}
\end{property}
\begin{proof}
    Let C$(\delta)$ denote the union of all the points in the $\delta$-covers of the countable partitions of $\Theta$ assumed to exist by Assumption~\ref{ass: piecewise Lipschitz}. 
    Since the partitions are countable and the $\delta$-covers for each region are assumed to be countable, we have that C$(\delta)$ is countable for all $\delta$. 
    Then by Property \ref{prop: convergence of unfair candidateValue} (which holds given Assumptions~\ref{ass: markov property}, \ref{ass: switch prediction}, \ref{ass: restrict feasible set}, and~\ref{ass: estimates match bound}), for all $\delta$, we have convergence for all $\theta \in \emph{C}(\delta)$ simultaneously:
    \begin{equation}
        \label{eq: delta convergence}
        \forall \delta > 0, \forall \epsilon > 0, \Pr\Big( \lim\limits_{n \rightarrow \infty} \text{inf} \{\omega \in \Omega: \forall \theta \in \emph{C}(\delta), |\hat c(\theta, D_c(\omega)) - c(\theta)| < \epsilon\} \Big) = 1.
    \end{equation}
    Now, consider a $\theta \not\in \emph{C}(\delta)$. By Definition~\ref{defn: delta covering} and Assumption~\ref{ass: piecewise Lipschitz}, $\exists \ \theta' \in \bar \Theta^c_i, d(\theta, \theta') \leq \delta$.
    Moreover, because $c$ and $\hat c$ are Lipschitz continuous on $\bar \Theta^c_i$ and $\bar \Theta^{\hat c}_i$ (by Assumption~\ref{ass: piecewise Lipschitz}) respectively, we have that $|c(\theta) - c(\theta')| \leq K\delta$ and $|\hat c(\theta, D_c(\omega)) - \hat c(\theta', D_c(\omega))| \leq \hat K\delta$. 
    %
    %
    So, $|\hat c(\theta, D_c(\omega)) - c(\theta)| \leq 
    |\hat c(\theta,D_c(\omega)) - c(\theta')| + K\delta \leq
    |\hat c(\theta',D_c(\omega))-c(\theta')| + \delta(K + \hat K)$.
    This means that for all $\delta > 0$:
    $$\Big( \forall \theta \in \text{C}(\delta), |\hat c(\theta, D_c(\omega)) - c(\theta)| < \epsilon\Big) \implies \Big(\forall \theta \in \bar \Theta, |\hat c(\theta, D_c(\omega)) - c(\theta)| < \epsilon + \delta (K + \hat K)\Big).$$
    Substituting this into~\eqref{eq: delta convergence}, we get:
    $$\forall \delta > 0, \forall \epsilon > 0, \Pr\Big(\lim\limits_{n \rightarrow \infty} \text{inf} \{\omega \in \Omega: \forall \theta \in \bar \Theta, \ |\hat c(\theta, D_c(\omega)) - c(\theta)| < \epsilon + \delta (K + \hat K)\}\Big) = 1.$$
    Now, let $\delta \coloneqq \epsilon/(K + \hat K)$ and $\epsilon'=2\epsilon$. Thus, we have the following:
    $$\forall \epsilon' > 0, \Pr\Big( \lim\limits_{n \rightarrow \infty} \text{inf} \{\omega \in \Omega: \forall \theta \in \bar \Theta, |\hat c(\theta, D_c(\omega)) - c(\theta)| < \epsilon'\} \Big) = 1.$$
\end{proof}

So, given the appropriate assumptions, for all $\theta \in \bar \Theta$, we have that $\hat c(\theta, D_c(\omega)) \overset{\text{a.s.}}{\longrightarrow} c(\theta)$ and that $\hat c(\theta^\star, D_c(\omega)) \overset{\text{a.s.}}{\longrightarrow} \ell(\theta^\star$).
Due to the countable additivity property of probability measures and Property~\ref{prop: simultaneous convergence}, we have the following: 

\begin{equation}
\label{eq: limits capped}
\Pr\Big(\Big[ \forall \theta \in \bar \Theta, \lim\limits_{n \rightarrow \infty} \hat c(\theta, D_c(\omega)) = c(\theta)\Big], \ \Big[ \lim\limits_{n \rightarrow \infty} \hat c(\theta^\star, D_c(\omega)) = \ell(\theta^\star)\Big]\Big) = 1,
\end{equation}
where $\Pr(A,B)$ denotes the joint probability of $A$ and $B$.

Let $H$ denote the set of $\omega \in \Omega$ such that~\eqref{eq: limits capped} is satisfied. Note that $\ell_{\max}$ is defined as the value always greater than $\ell(\theta)$ for all $\theta \in \Theta$, and $g(\theta)\geq-\xi$ for all $\theta \in \bar \Theta$.
So, for all $\omega \in H$, for sufficiently large $n$, candidate selection will not define $\theta_c$ to be in $\bar \Theta$.
Since $\omega$ is in $H$ almost surely ($\Pr(\omega \in H)=1$), we therefore have that $\lim_{n \rightarrow \infty} \Pr(\theta_c \not\in \bar \Theta) = 1$.

The remaining challenge is to establish that, given $\theta_c \not \in \bar \Theta$, the probability that the fairness test returns $\theta_c$ rather than \NSF\ converges to one as $n \rightarrow \infty$. 
%
%
By Property~\ref{prop: bounds converge to g}, we have that $U(\theta_c,D_f)\overset{\text{a.s.}}{\longrightarrow} g(\theta_c)$. 
Furthermore, by the definition of $\bar \Theta$, when $\theta_c \not \in \bar \Theta$ we have that $g(\theta_c) < -\xi / 2$. 
So, $U(\theta_c, D_f)$ converges almost surely to a value less than $-\xi/2$. 
Since the fairness test returns $\theta_c$ rather than \NSF~if $U(\theta_c,D_f) \leq -\xi/4$ and $U(\theta_c, D_f)$ converges almost surely to a value less than $-\xi/2$, it follows that the probability that $U(\theta_c,D_f) \leq -\xi/4$ converges to one. 
Hence, given that $\theta_c \not \in \bar \Theta$, the probability that $\theta_c$ is returned rather than \NSF~converges to one.

We therefore have that \textbf{1)} the probability that  $\theta_c \not \in \bar \Theta$ converges to one as $n \to \infty$ and \textbf{2)} given that $\theta_c \not \in \bar \Theta$, the probability that $\theta_c$ is returned rather than \NSF~converges to one. 
Since $\theta_c \not \in \bar \Theta$ implies that $\theta_c$ is fair, these two properties imply that the probability that a fair solution is returned converges to one as $n \to \infty$. 

\section{A discussion on the intuition and implications of our assumptions}
\label{app: assumption intuitions} 
 
The theoretical results in this paper, which ensure \algname's convergence and high-confidence fairness guarantees, are based on Assumptions 1--8. In this section, we provide an intuitive, high-level discussion on the meaning and implications of these assumptions. Our goal is to show that they are standard in the machine learning literature and reasonable in many real-life settings.

\textbf{Assumption 1.} This assumes the commonly-occurring setting in which a classifier's prediction depends only on an input feature vector, $X$. It is a formal characterization of the classic supervised learning setting; that is, that machine learning models should predict the target variable of interest, $Y$, based only on an input feature vector, $X$. In other words, we are dealing with a standard classification problem.

\textbf{Assumption 2.} The delayed impact of a decision depends only on the decision itself, not on the machine learning algorithm used to make it. In our running example, for instance, it makes no difference whether an SVM or a neural network made a loan-repayment prediction; its delayed impact depends only on whether the person actually received the loan.

\textbf{Assumptions 3 and 4.} These assumptions can be trivially satisfied when our algorithm uses standard modern stochastic classifiers that place non-zero probability on all outputs; for instance, when using Softmax layers in a neural network. This assumption is common in the offline RL literature---for example, in methods that evaluate new policies given information from previously-deployed policies.

\textbf{Assumption 5.} Our algorithm uses standard statistical tools, common in the machine learning literature, to compute confidence bounds: Hoeffding's inequality and Student's t-test. Hoeffding's inequality can be applied under mild assumptions. In our running example, lending decisions made by a bank may have a delayed impact, e.g., on an applicant's future savings rate. Hoeffding's inequality holds if the bank knows the minimum and maximum savings rate possible (i.e., the DI is bounded in some interval $[a,b]$). Bounds produced by Student's t-test hold exactly if the sample mean is normally distributed, and in the limit (as the number of samples increases) if the sample mean follows a different distribution. With few samples, bounds based on Student's t-test may hold approximately. Despite this, their use remains effective and commonplace in the sciences, including, e.g., in high-risk medical research \citep{thomas2019preventing}.

\textbf{Assumption 6.} The cost function used to evaluate classifiers is smooth: similar classifiers have similar costs/performances. Smoothness assumptions of this type are common the machine learning literature. Also, each classifier can be described by a set of real-valued parameters ($\theta \in \mathbb R^m$), as is the case with all parametric supervised learning algorithms. 

\textbf{Assumption 7.} The space of classifiers is not degenerate: at least one fair solution exists such that if we perturb its parameters infinitesimally, it would not become arbitrarily unfair.

\textbf{Assumption 8.} The sample performance of a classifier converges to its true expected performance given enough data. This is similar to the usual assumption, e.g., in the regression setting, that a model's sample Mean Squared Error (MSE) converges to its true MSE given sufficient examples.

\section{Full algorithm}
\label{app: full algorithm}

\begin{algorithm}[tb]
\caption{\texttt{cost}}
\label{alg: cost function}
\textbf{Input}: \textbf{1)} the vector $\theta$ that parameterizes model $\pi$; \textbf{2)} $D_c = \{(X_i, Y_i, T_i, \widehat Y^\beta_i, I^\beta_i)\}^{m}_{i=1}$; 
\textbf{3)} confidence level $\delta$;
\textbf{4)} tolerance value $\tau$;
 \textbf{5)} the behavior model $\beta$; 
\textbf{6)} $\texttt{Bound} \in\{ \texttt{Hoeff}, \texttt{ttest}\}$; and 
\textbf{7)} the number of data points in $D_f$, denoted $n_{D_f}$.\\
\textbf{Output}: The cost of $\pi$.
\begin{algorithmic}[1] 
\STATE $\hat g \leftarrow \langle \ \rangle$
\FOR{$i\in\{1,..., m\}$}
\STATE \textbf{if} $c(X_i, Y_i, T_i)$ is \texttt{True} \textbf{then} $\hat g$.append$\left(\tau - \frac{\pi_\theta(X_i,\widehat
Y^\beta_i)}{\beta(X_i, \widehat Y^\beta_i)}I^\beta_i\right)$ \textbf{end if} 
\ENDFOR 
\STATE Let $\lambda = 2; \quad n_{\hat g} = \texttt{length}(\hat g)$
%
\STATE \textbf{if} {\texttt{Bound} is \texttt{Hoeff}} \textbf{then } 
\STATE $\quad a,b \leftarrow$ upper and lower bounds of $g$
\STATE $\quad U^{+} = \frac{1}{n_{\hat g}}\left( \sum^{n_{\hat g}}_{\iota=1} \hat g_\iota\right) + \lambda  (b{-}a)\sqrt{\frac{\log(1/\delta)}{(2n_{D_f})}}$ 
\STATE \textbf{else if} \texttt{Bound} is \texttt{ttest} \textbf{then} $U^{+} = \frac{1}{n_{\hat g}}\left( \sum^{n_{\hat g}}_{\iota=1} \hat g_\iota\right) + \lambda\frac{\sigma(\hat g)}{\sqrt{n_{D_f}}}t_{1-\delta, n_{D_f}-1}$ 
\STATE \textbf{end if}
\STATE $\ell_{\max} = \max_{\theta' \in \Theta} \hat \ell(\theta', D_c)$
\STATE \textbf{if} $U^{+} {\leq} {-}\frac{\xi}{4}$ \textbf{return} $\hat \ell(\theta,D_c)$ \textbf{else return}  $\left(\ell_{\max} + U^{+}\right)$
\end{algorithmic}
\end{algorithm}

\begin{algorithm}[tb]
\caption{\algname with Multiple Constraints}
\label{alg: main algorithm multiple constraints}
\textbf{Input}: \textbf{1)} dataset $D = \{(X_i, Y_i, T_i, \widehat Y^\beta_i, I^\beta_i)\}^n_{i=1}$;
\textbf{2)} the number of delayed-impact constraints we wish to satisfy, $k$; 
\textbf{3)} a sequence of Boolean conditionals $(c_j)_{j=1}^k$ such that for $j\in\{1,...,k\}$, $c_j(X_i, Y_i, T_i)$ indicates whether the event associated with the data point $(X_i, Y_i, T_i, \widehat Y_i^\beta, I^\beta_i)$ occurs;
\textbf{4)} confidence levels $\delta=(\delta_j)_{j=1}^k$, where each $\delta_j \in (0,1)$ corresponds to delayed-impact constraint $g_j$; %
\textbf{5)} tolerance values $\tau = (\tau_j)_{j=1}^k$, where each $\tau_j$ is the tolerance associated with delayed-impact constraint $g_j$;
\textbf{6)} the behavior model $\beta$; 
 and
\textbf{7)} an argument $\texttt{Bound} \in\{ \texttt{Hoeff}, \texttt{ttest}\}$ indicating which method for calculating upper bounds to use.
\\
\textbf{Output}: Solution $\theta_c$ or \NSF.
\begin{algorithmic}[1] 
\STATE $D_c, D_f \leftarrow \texttt{partition}(D)$
\STATE $\theta_c \leftarrow \arg\min_{\theta\in\Theta}  \texttt{cost}(\theta,D_c,k, \delta, \tau, \beta, \texttt{Bound}, \texttt{length}(D_f))$
\STATE $U \leftarrow \langle \ \rangle$
\FOR{$j \in \{1,...,k\}$}
\STATE $\hat g_j \leftarrow \langle \ \rangle$
\FOR{$i\in\{1,...,n\}$}
\STATE \textbf{if } $c_j(X_i, Y_i, T_i)$ is \texttt{True} \textbf{ then} $\hat g_j$.append$\left(\tau_j - \frac{\pi_{\theta_c}(X_i,\widehat
Y^\beta_i)}{\beta(X_i, \widehat Y^\beta_i)}I^\beta_i\right)$ \textbf{end if} 
\ENDFOR 

%
%
\STATE \textbf{if} \texttt{Bound} is \texttt{Hoeff} \textbf{then} $U.$append($U_\texttt{Hoeff} (\hat g_j)$) \textbf{else} $U.$append($U_{\texttt{ttest}}(\hat g_j)$) \textbf{end}
%
%
\ENDFOR
\STATE \textbf{if} $\forall j\in\{1, ..., k\}, U_j\leq 0$ \textbf{then return} $\theta_c$  \textbf{else return} \NSF\\
\end{algorithmic}
\end{algorithm}
\begin{algorithm}[tb]
\caption{\texttt{cost} with Multiple Constraints}
\label{alg: utility function multiple constraints}
\textbf{Input}: \textbf{1)} the vector $\theta$ that parameterizes a classification model $\pi$; \textbf{2)} candidate dataset $D_c = \{(X_i, Y_i, T_i, \widehat Y^\beta_i, I^\beta_i)\}^{m}_{i=1}$; 
\textbf{3)} the number of delayed-impact constraints we wish to satisfy, $k$; 
\textbf{4)} a sequence of Boolean conditionals $(c_j)_{j=1}^k$ such that for $j\in\{1,...,k\}$, $c_j(X_i, Y_i, T_i)$ indicates whether the event associated with the data point $(X_i, Y_i, T_i, \widehat Y_i^\beta, I^\beta_i)$ occurs;
\textbf{5)} confidence levels $\delta=\{\delta_j\}_{j=1}^k$, where each $\delta_j \in (0,1)$ corresponds with constraint $g_j$;
\textbf{6)} tolerance values $\tau = \{\tau_j\}_{j=1}^k$, where each $\tau_j$ is the tolerance associated with delayed-impact constraint $g_j$;
 \textbf{7)} the behavior model $\beta$; 
\textbf{8)} an argument $\texttt{Bound} \in\{ \texttt{Hoeff}, \texttt{ttest}\}$ indicating which method for calculating upper bounds to use; and 
\textbf{9)} the number of data points in dataset $D_f$, denoted $n_{D_f}$.\\
\textbf{Output}: The cost associated with classification model $\pi_\theta$.
\begin{algorithmic}[1] 
\FOR{$j \in \{1,...,k\}$}
\STATE $\hat g_j \leftarrow \langle \ \rangle$
\FOR{$i\in\{1, ..., m\}$}
\STATE \textbf{if } $c_j(X_i, Y_i, T_i)$ is \texttt{True} \textbf{ then} $\hat g_j$.append$\left(\tau_j - \frac{\pi_\theta(X_i,\widehat
Y^\beta_i)}{\beta(X_i, \widehat Y^\beta_i)}I^\beta_i\right)$ \textbf{end if}  
\ENDFOR 
\STATE Let $\lambda = 2; \quad n_{\hat g_j} = \texttt{length}(\hat g_j)$
\IF {\texttt{Bound} is \texttt{Hoeff}}
\STATE Let $a,b$ be the lower and upper bounds of $g_j$
\STATE $U_j^+ = \frac{1}{n_{\hat g_j}}\left( \sum^{n_{\hat g_j}}_{\iota=1} (\hat g_j)_\iota\right) + \lambda(b{-}a)\sqrt{\frac{\log(1/\delta_j)}{(2n_{D_f})}}$ 
\ELSIF{\texttt{Bound} is \texttt{ttest}}
\STATE $U_j^+ = \frac{1}{n_{\hat g_j}}\left( \sum^{n_{\hat g_j}}_{\iota=1} (\hat g_j)_\iota\right) + \lambda \frac{\sigma(\hat g_j)}{\sqrt{n_{D_f}}}t_{1-\delta_j, n_{D_f}-1}$ 
\ENDIF
\ENDFOR
\STATE $\ell_{\max} = \max_{\theta' \in \Theta} \hat \ell(\theta', D_c)$
\STATE \textbf{if} $\forall j \in \{1,...,k\}, U_j^+ \leq -\xi / 4 \,\,$ \textbf{then return} $\hat \ell(\theta,D_c)$ \textbf{else return} $\left(\ell_{\max} +  \sum_{j=1}^k U^{\text{inflated}}_j\right)$  
%
\end{algorithmic}
\end{algorithm}

Algorithm~\ref{alg: cost function} presents the cost function used in candidate selection (line 3 of Algorithm~\ref{alg: main algorithm}), where a strategy like the one used in the fairness test is used to calculate the cost, or utility, of a potential solution $\theta$.
Again, unbiased estimates of $g(\theta)$ are calculated, this time using dataset $D_c$ (lines 2--4).
%
%
Instead of calculating a high-confidence upper bound on $g(\theta)$ using $U_\texttt{Hoeff}$ or $U_\texttt{ttest}$, we calculate an \textit{inflated} upper bound $U^+$. Specifically, we inflate the width of the confidence interval used to compute the upper bound (lines 5--10).
This is to mitigate the fact that multiple comparisons are performed on the same dataset ($D_c$) during the search for a candidate solution (see line $3$ of Algorithm~\ref{alg: main algorithm}), which often leads candidate selection to overestimate its confidence that the solution it picks will pass the fairness test. 
Our choice to inflate the confidence interval in this way, i.e., considering the size of the dataset $D_f$ used in the fairness test and the use of scaling constant $\lambda$, is empirically driven and was first proposed for other Seldonian algorithms~\citep{thomas2019preventing}.

If $U^+ \leq -\xi/4$, a small negative constant, the cost associated with the loss of $\theta$ 
is returned. 
Otherwise, the cost of $\theta$ is defined as the sum of $U^+$ and the maximum loss that can be obtained on dataset $D_c$ (lines 11--12). 
This discourages candidate selection from returning models unlikely to pass the fairness test. 
%
We consider $-\xi/4$, instead of $0$ as the fairness threshold in \algname to ensure consistency. This is discussed in more detail in Appendix~\ref{app: consistency proof}. 

Algorithm~\ref{alg: main algorithm multiple constraints} shows \algname with multiple constraints. The changes relative to Algorithm \ref{alg: main algorithm} are relatively small: instead of considering only a single constraint, the fairness test loops over all $k$ constraints and only returns the candidate solution if all $k$ high-confidence upper bounds are at most zero. Similarly, the cost function, Algorithm \ref{alg: utility function multiple constraints}, changes relative to Algorithm \ref{alg: cost function} in that when predicting the outcome of the fairness test it includes this same loop over all $k$ delayed-impact constraints.

\section{Other experiments}
\label{app: experiments}

In all experiments, our implementation of \algname used CMA-ES~\citep{hansen2001completely} to search over the space of candidate solutions and the $\texttt{ttest}$ concentration inequality. We partitioned the dataset $D$ into $D_c$ and $D_f$ using a stratified sampling approach where $D_c$ contains 60\% of the data and $D_f$ contains 40\% of the data.

In this section, we present the complete set of results for RQ1 and RQ2: does \algname enforce DI constraints, with high probability, when existing fairness-aware algorithms often fail; and what is the cost of enforcing DI constraints.
In particular, we show the performance of the five algorithms being compared, in terms of failure rate, probability of returning a solution, and accuracy, for different values of $\alpha$ and as a function of $n$. Notice that Figures~\ref{fig:complete comparison pt1} and~\ref{fig:complete comparison pt2} present results consistent with the observations made in Section~\ref{sec: experiments}: the qualitative behavior of all considered algorithms remains the same for all values of $\alpha$.

\begin{figure}
    \centering
    \includegraphics[width=\columnwidth]{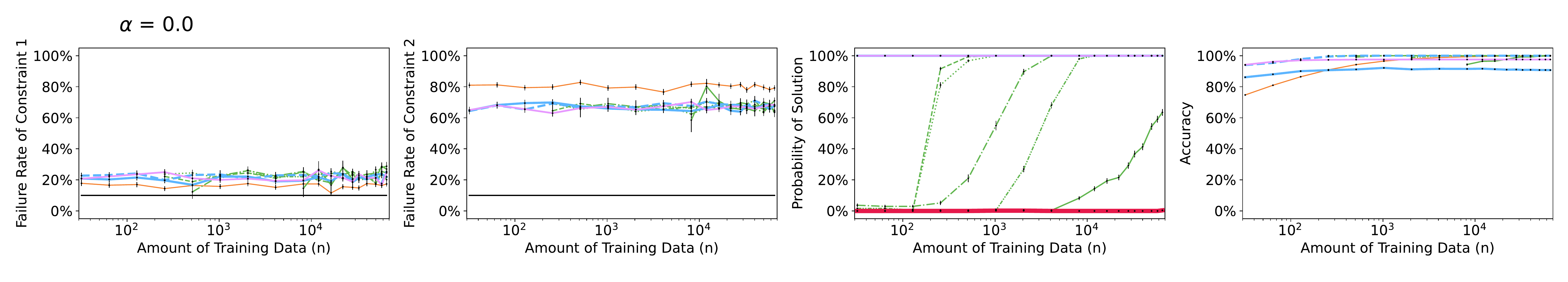}
    \includegraphics[width=\columnwidth]{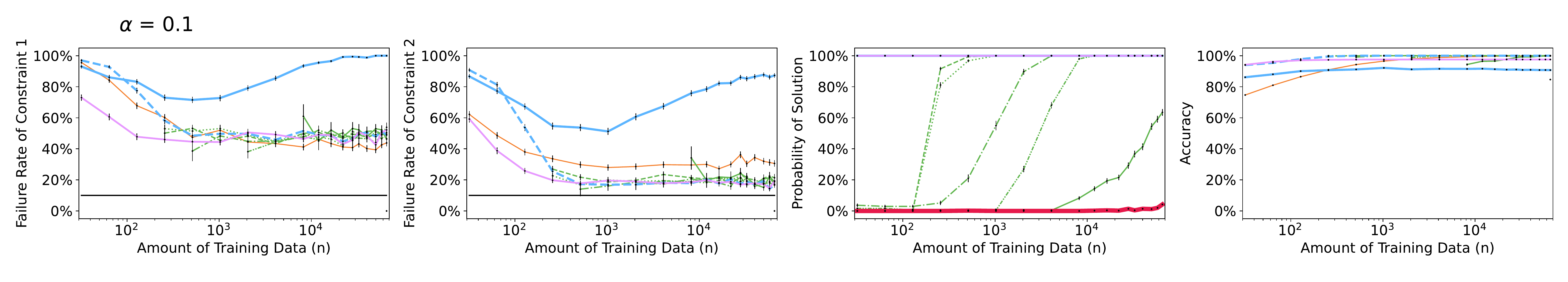}
    \includegraphics[width=\columnwidth]{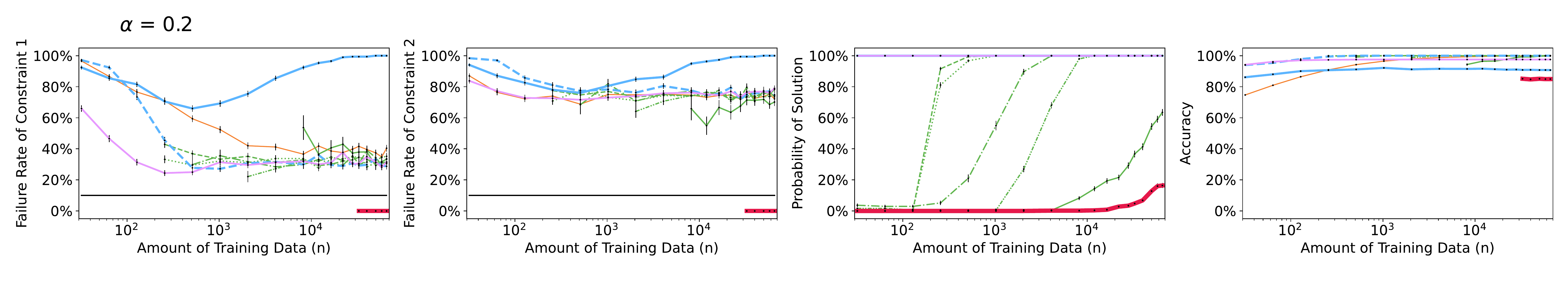}
    \includegraphics[width=\columnwidth]{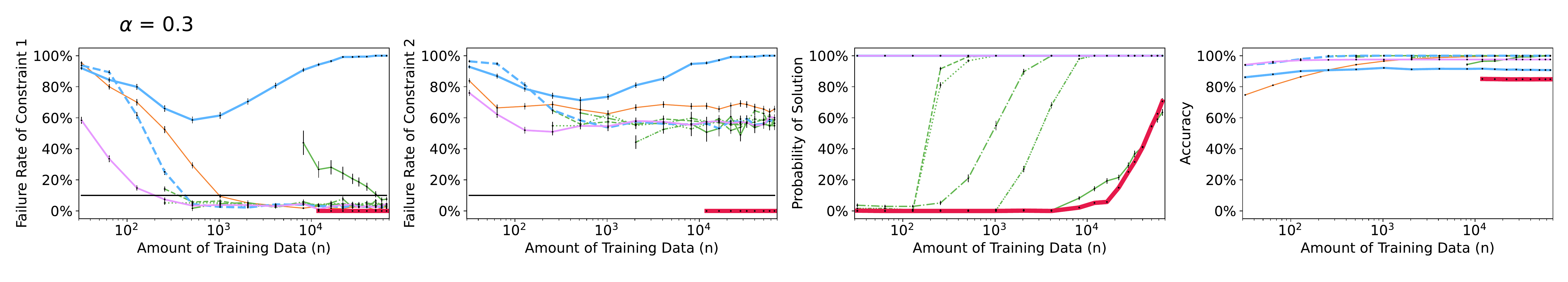}
    \includegraphics[width=\columnwidth]{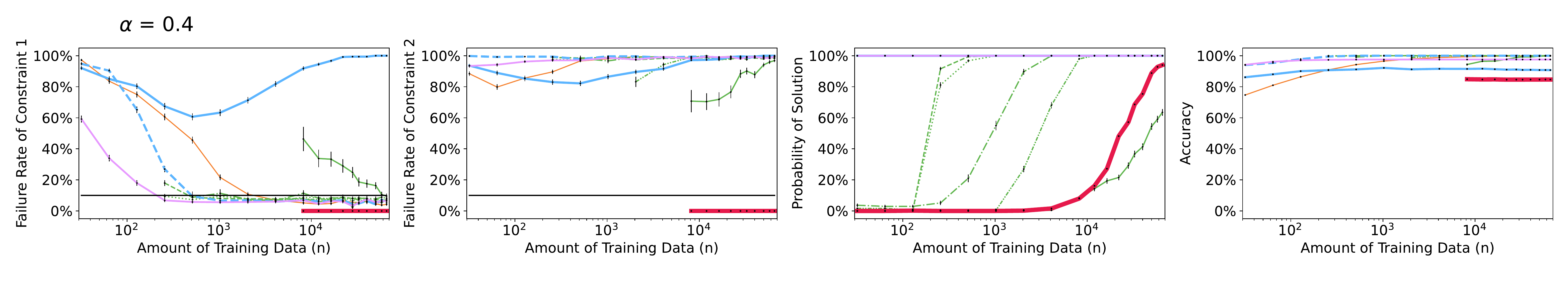}
    \caption{Algorithms' performances in terms of failure rate (leftmost two columns), probability of returning a solution (second from the right column), and accuracy (right column), as a function of $n$ and for different values of $\alpha$. The black horizontal lines indicate the maximum admissible probability of unfairness, $\delta_0=\delta_1=10\%$. All plots use the following legend: \elfline\,ELF~~ \lrline\,LR~~ \qsaDPline\,QSA with DP~~ \qsaEqOddsline\,QSA with EqOdds~~ \qsaEqOppline\,QSA with EqOpp~~ \qsaPEline\,QSA with PE~~ \qsaDisImpline\,QSA with DisImp~~ \flDPline\,Fairlearn with DP~~ \flEqOddsline\,Fairlearn with EqOdds~~ \fcline\,Fairness Constraints.}
    \label{fig:complete comparison pt1}
\end{figure}

\begin{figure}
    \centering
    \includegraphics[width=\columnwidth]{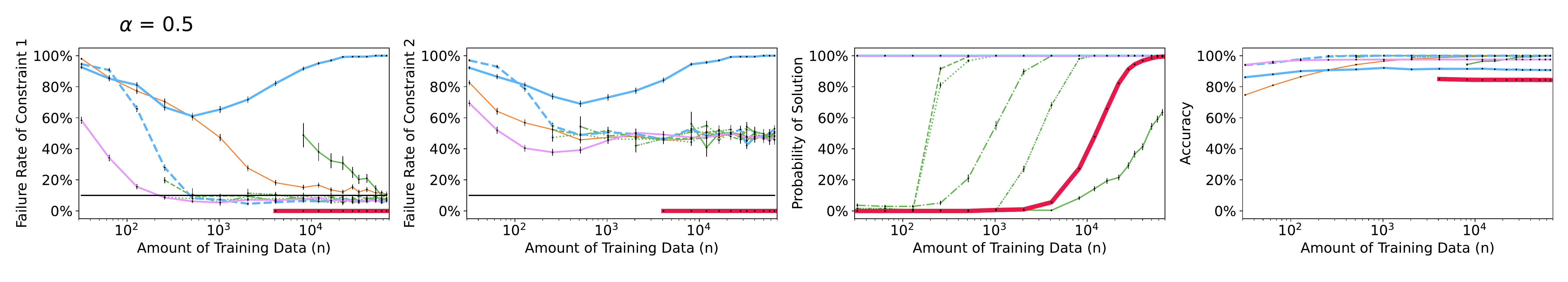}
    \includegraphics[width=\columnwidth]{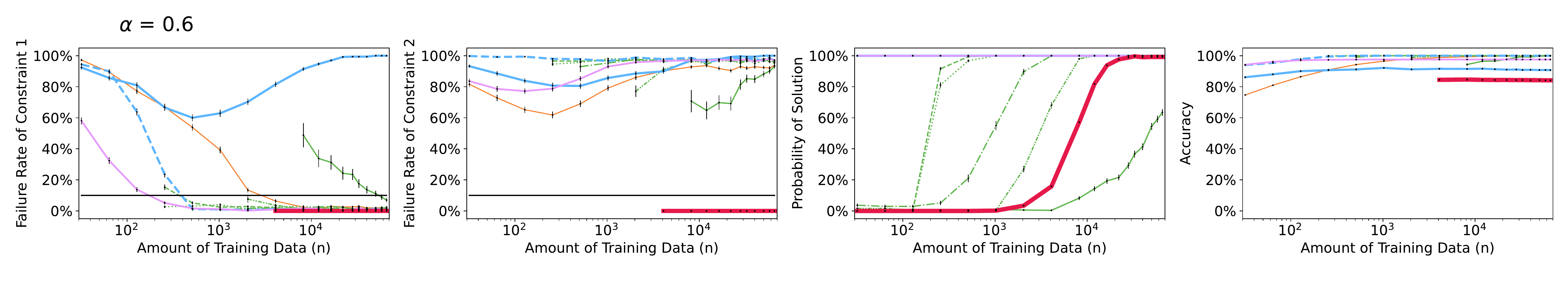}
    \includegraphics[width=\columnwidth]{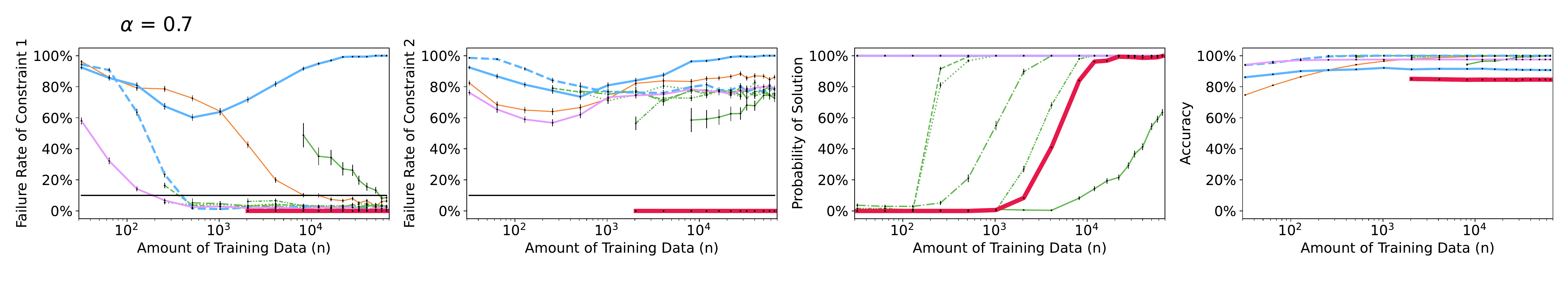}
    \includegraphics[width=\columnwidth]{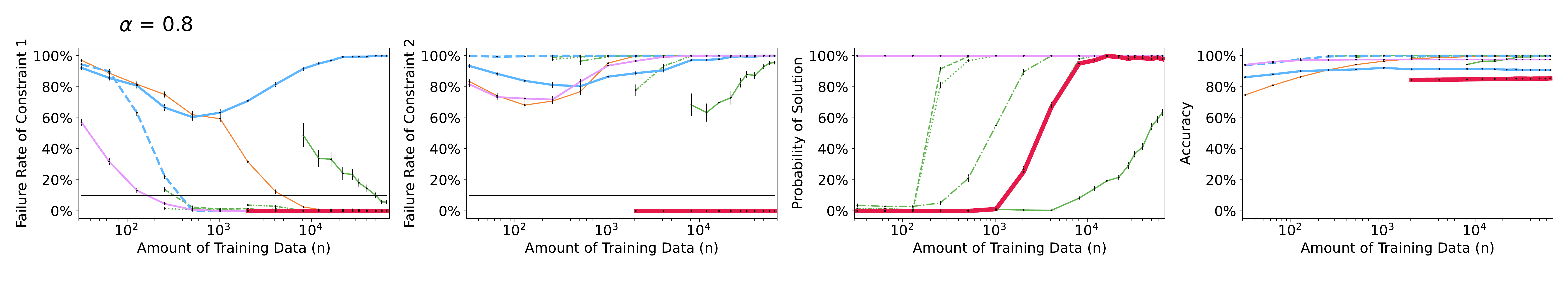}
    \includegraphics[width=\columnwidth]{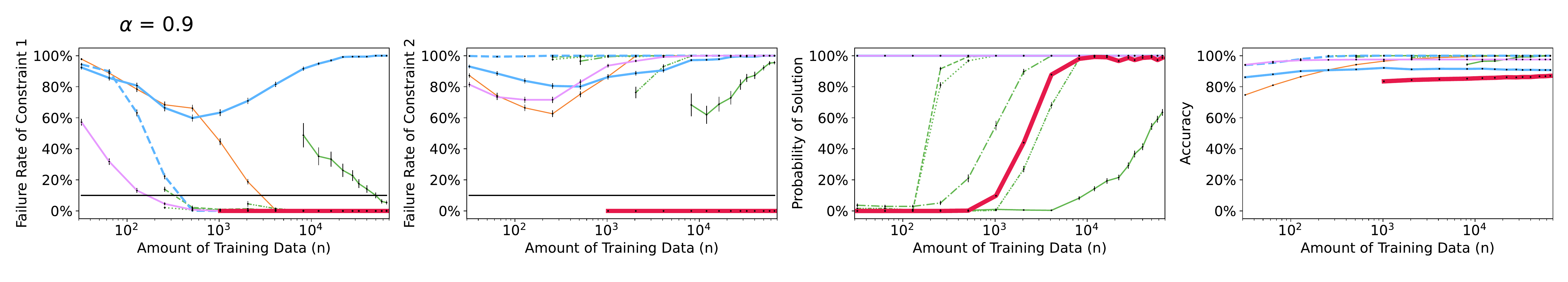}
    \includegraphics[width=\columnwidth]{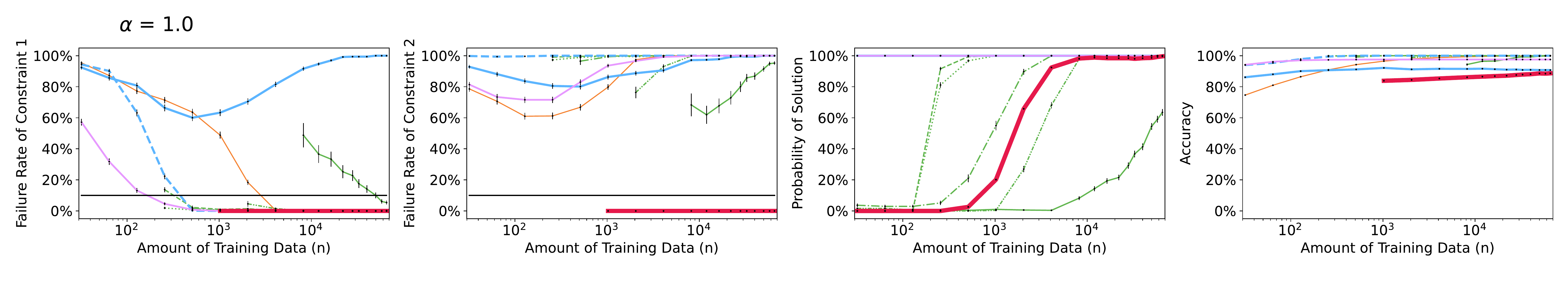}
    \caption{Algorithms' performance in terms of failure rate (leftmost two columns), probability of returning a solution (second from the right column), and accuracy (right column), as a function of $n$ and for different values of $\alpha$. The black horizontal lines indicate the maximum admissible probability of unfairness, $\delta_0=\delta_1=10\%$. All plots use the following legend: \elfline\,ELF~~ \lrline\,LR~~ \qsaDPline\,QSA with DP~~ \qsaEqOddsline\,QSA with EqOdds~~ \qsaEqOppline\,QSA with EqOpp~~ \qsaPEline\,QSA with PE~~ \qsaDisImpline\,QSA with DisImp~~ \flDPline\,Fairlearn with DP~~ \flEqOddsline\,Fairlearn with EqOdds~~ \fcline\,Fairness Constraints.}
    \label{fig:complete comparison pt2}
\end{figure}

\end{document}